\newcounter{dbaCounter}
\newcounter{ggiCounter}
\newcounter{ybCounter}
\newcounter{wCounter}
\newcounter{imgCounter}
\definecolor{orangeblind}{HTML}{E69F00}
\definecolor{blueblind}{HTML}{1A77C9}
\definecolor{greenblind}{HTML}{107D60}
\definecolor{beaublue}{rgb}{0.74, 0.83, 0.9}
\definecolor{babyblueeyes}{rgb}{0.63, 0.79, 0.95}
\definecolor{mydarkblue}{rgb}{0,0.08,0.45}
\definecolor{myblue}{HTML}{D3E4FC}
\newcommand{\Val}{\mathrm{V}}
\DeclareMathOperator{\relu}{ReLU}
\newtheorem{assumption}{Assumption}
\newtheorem{definition}{Definition}
\newtheorem{example}{Example}
\DeclareMathOperator{\hull}{hull}
\DeclareMathOperator{\closure}{closure}
\def\1{\bm{1}}
\DeclareMathAlphabet{\mathsfit}{\encodingdefault}{\sfdefault}{m}{sl}
\SetMathAlphabet{\mathsfit}{bold}{\encodingdefault}{\sfdefault}{bx}{n}
\def\gF{{\mathcal{F}}}
\def\gG{{\mathcal{G}}}
\def\gH{{\mathcal{H}}}
\def\gP{{\mathcal{P}}}
\def\gX{{\mathcal{X}}}
\def\sN{{\mathbb{N}}}
\def\sP{{\mathbb{P}}}
\def\sR{{\mathbb{R}}}
\newcommand{\E}{\mathbb{E}}
\newcommand{\R}{\mathbb{R}}
\DeclareMathOperator{\diam}{diam}
\newlength\myindent
\newcommand{\newreptheorem}[2]{\newtheorem*{rep@#1}{\rep@title}\newenvironment{rep#1}[1]{\def\rep@title{#2 \ref*{##1}}\begin{rep@#1}}{\end{rep@#1}}}
\declaretheorem[name=Theorem]{theorem}
\declaretheorem[name=Lemma]{lemma}
\begin{document}

%

%
\runningtitle{A Limited-Capacity Minimax Theorem for Nonconvex-Nonconcave Games}
\runningauthor{Gauthier Gidel, David Balduzzi, Wojciech Marian Czarnecki, Marta Garnelo, Yoram Bachrach}
\twocolumn[

\aistatstitle{A Limited-Capacity Minimax Theorem for Non-Convex Games or: \\[1mm] \large How I Learned to Stop Worrying about Mixed-Nash and Love Neural Nets}

\vspace{-4mm}
\aistatsauthor{ 
Gauthier Gidel$^{\dag,\spadesuit}$
\And 
David Balduzzi$^{\spadesuit}$} 
\aistatsaddress{Mila, Universit\'e de Montr\'eal  \And XTX Markets }
\vspace{-2mm}
\aistatsauthor{Wojciech Marian Czarnecki
 \And  
 Marta Garnelo
 \And 
 Yoram Bachrach}
 \aistatsaddress{DeepMind \And DeepMind \And DeepMind}
]

\begin{abstract}
  Adversarial training, a special case of multi-objective optimization, is an increasingly prevalent machine learning technique: some of its most notable applications include GAN-based generative modeling and self-play techniques in reinforcement learning which have been applied to complex games such as Go or Poker. In practice, a \emph{single} pair of networks is typically trained to find an approximate equilibrium of a highly nonconcave-nonconvex adversarial problem. However, while a classic result in game theory states such an equilibrium exists in concave-convex games, there is no analogous guarantee if the payoff is nonconcave-nonconvex. Our main contribution is to provide an approximate minimax theorem for a large class of games where the players pick neural networks including WGAN, StarCraft II and Blotto Game. Our findings rely on the fact that despite being nonconcave-nonconvex with respect to the neural networks parameters, these games are concave-convex with respect to the actual models (e.g., functions or distributions) represented by these neural networks. 
\end{abstract}

\section{INTRODUCTION}

Real-world games have been used as benchmarks in artificial intelligence for decades~\citep{samuel1959some, tesauro1995temporal}, with recent progress on increasingly complex domains such as poker~\citep{brown2017safe, brown2019superhuman}, chess,  Go~\citep{silver2017mastering}, and StarCraft II~\citep{vinyals2019grandmaster}. Simultaneously, remarkable advances in generative modeling of images~\citep{wu2019logan} and speech synthesis~\citep{binkowski2020high} have resulted from two-player games \emph{explicitly} designed to facilitate via carefully constructed arms races~\citep{goodfellow2014generative}. Two-player zero-sum games also play a fundamental role in building classifiers that are robust to adversarial attacks~\citep{madry2017towards}.    

The goal of the paper is to put learning---by neural nets---in two-player zero-sum games on a firm theoretical foundation to answer the question: \emph{What does it mean to solve a game with neural nets?}

In single-objective optimization, performance is well-defined via a fixed objective. However, it is not obvious what counts as ``optimal" in a two-player zero-sum nonconcave-nonconvex setting. 
Since each player's goal is to maximize its payoff, it is natural to ask whether a player can maximize its utility independently of how the other player behaves. \citet{von1944theory} laid the foundation of game theory with the Minimax theorem, which provides a meaningful notion of optimal behavior against an unknown adversary. 
For a two-player zero-sum game, such a solution concept incorporates two notions:
\begin{enumerate*}[itemjoin = \;\,, label=(\roman*)]
        \item \emph{a value} $\Val$,  
    \item \emph{a strategy for each player} such that their average gain is at least $\Val$ (resp. -$\Val$) no matter what the other does.
\end{enumerate*}
The existence of such a value and optimal strategies in concave-convex games is guaranteed in~\citet{sion1958general}, an extension of von Neumann's result.

From a game-theoretic perspective, minimax may not exist in nonconcave-nonconvex (NC-NC) games. Nevertheless, machine learning (ML) practitioners typically train a \emph{single} pair of neural networks to solve 
\begin{equation} \label{eq:nonconcave-nonconvex}
    \min_{\theta \in  \Theta} \max_{w \in \Omega} \varphi(w,\theta) 
    \quad \text{where} \quad \varphi(\cdot,\cdot) \; \text{is NC-NC} \,.
\end{equation}
Previous work~\citep{arora2017generalization,hsieh2019finding,domingo2020mean} coped with this nonconcave-nonconvexity issue by transforming Eq.~\ref{eq:nonconcave-nonconvex} into a bilinear minimax problem over the Borel distributions on $\Theta$ and $\Omega$ (a.k.a. lifting trick), 
\begin{equation}\label{eq:lifted_minimax}
    \min_{\mu \in  \gP(\Theta)} \max_{\nu \in \gP(\Omega)} \langle \mu, A\nu \rangle :=
        \E_{\substack{\theta \sim \mu\\ w \sim \nu}}[\varphi(w,\theta) ]
\end{equation}
However, working on the space of distributions (a.k.a, mixed strategies) over the weights of a neural network is not practical and does not exactly correspond to the initial problem~\eqref{eq:nonconcave-nonconvex}. That is why we do not consider~\eqref{eq:lifted_minimax} and focus instead on minimax theorems for~\eqref{eq:nonconcave-nonconvex}. 

Our main contribution is Theorem~\ref{thm:non-conv_minimax}, an approximate limited-capacity minimax theorem for NC-NC games. This result contrasts with the negative result of~\citet{jin2019local} who construct a NC-NC game where pure global minimax does not exist. The insights provided by our main theorem are three-fold; first, it provides a principled explanation of why practitioners have successfully trained a single pair of neural nets in games like GANs. Secondly, the equilibrium achieved in the theorem has a meaningful interpretation as the solution of a game where the players have \emph{limited-capacity}. Finally, it shed light on some geometric properties of the space of neural networks (roughly, it is 'almost convex'), that could be leveraged in diverse contexts of deep learning theory.
All the proofs can be found in the appendix.


\section{RELATED WORK}

\paragraph{Minimax theorems in GANs.}
Many papers have adopted a game-theoretic perspective on GANs, motivating the computation of distributions of generators (in practice, finite collections)~\citep{arora2017generalization,oliehoek2018beyond,hsieh2019finding,grnarova2017online,domingo2020mean} or an average of discriminators~\citet{Farnia2018convex}.
However, these papers fail to explain why, in practice, it suffices to train only a single generator and discriminator (instead of collections) to achieve state-of-the-art performance~\citep{brock2019large}.
Overall, even if we share motivations with the related work mentioned above (providing principled results), our results and conclusion are fundamentally different: we explain why using a single generator and discriminator---not a distribution over them---makes sense. We do so by proving that one can achieve a notion of nonconcave-nonconvex minimax equilibrium in GANs parametrized with neural nets.

\paragraph{Stackelberg games and local optimality.}
The literature has considered other notions of equilibrium.
Recently,~\citet{fiez2020convergence} proved results on games where the goal is to find a (local) Stackelberg equilibrium. Using that perspective,~\citet{zhang2020optimality} and \citet{jin2019local} studied local-optimality in the context of nonconcave-nonconvex games.
Alternative notions of local equilibria have been proposed and studied by~\citet{berard2020closer, farnia2020gans, schaefer2020implicit} who argued that local Nash equilibria may not be the right notion of local optimality.
Our work provides a complementary perspective by providing a \emph{global} minimax optimality theorem in a restricted---though realistic---nonconcave-nonconvex setting.
Stackelberg equilibria may be meaningful in some contexts, such as adversarial training, but we argue in \S\ref{app:importance of minimax} that the minimax theorem is fundamental for defining a valid notion of solution for many machine learning applications.

\paragraph{Parametrized strategies in games.}
The notion of latent matrix games mentioned in this paper is similar to the pushforward technique proposed by~\citet{dou2019finding}. It can also be related to the latent policies used in some multi-agent reinforcement learning (RL) applications.
For instance the agent trained by~\citet{vinyals2019grandmaster} to play the game of StarCraft II had policies of form $\pi(a|s,z)$ where $z$ belongs to structured space that corresponds to a particular way to start the game or to actions it should complete during the game (e.g., first constructed buildings). 
Moreover, using parametrized function to estimate strategies in games has been at the heart of multi-agent RL~\citep{baxter2000learning,franccois2018introduction,mnih2015human}. The idea of using a latent space to parametrize distributions has also been widely used in the ML community~\citep{goodfellow2014generative}. Our contribution regarding latent matrix games (and more broadly latent RL policies) is theoretical: 
we provide a framework to study equilibria in parametrized NC-NC games as well as an existence result for an equilibrium (Thm.~\ref{thm:non-conv_minimax}).

\paragraph{Bounded rationality.}
In his seminal work,~\citet{simon1969sciences} introduced the principled of bounded rationality. Generally speaking, it aims to capture the idea that rational agents are actually incapable of dealing with the full complexity of a realistic environment, and thus by these limitations, achieve a sub-optimal solution. \citet{neyman1985bounded,papadimitriou1994complexity,rubinstein1998modeling} modeled these limitations by constraining the computational resources of the players. Similarly, Quantal response equilibrium (QRE)~\citep{mckelvey1995quantal} is a way to model bounded rationality: the players do not choose the best action, but assign higher probabilities to actions with higher reward. Overall, QRE, bounded rationality/computation have a similar goal as our notion of limited capacity equilibrium: to model players that are limited by computation/memory/reasoning. However, the way the limits are modeled differs since, in this work, the limitations of the players are embedded in the representative power of the class of function considered. 

\paragraph{Finding a Nash equilibrium of Colonel Blotto.} After its introduction by~\citet{borel1921theorie}, finding a Nash equilibrium of the Colonel Blotto game has been an open question for 85 years. \citet{roberson2006colonel} found an equilibrium solution for the continuous version of the game, later extended to the discrete symmetric case by~\citet{hart2008discrete}. The equilibrium computation in the general case remains open. Recently, Blotto has been used as a challenging use-case for equilibrium computation~\citep{ahmadinejad2019duels}. Similarly, we consider a variant of Blotto to validate that we can practically find approximate equilibrium in games with a single pair of neural networks.

\section{MOTIVATION: MINIMAX IN MACHINE LEARNING}
\label{sec:latent_games_in_ml}
A \emph{two-player zero-sum game} is a \emph{payoff function} $\varphi:\Omega \times \Theta \to \R$, that evaluates pairs of strategies $(w,\theta)$. The goal of the game is to find an \emph{equilibrium}, i.e., a pair of strategies $(\omega^*,\theta^*)$ such that, 
\begin{equation}
     \varphi(w,\theta^*) \leq \varphi(w^*,\theta^*) \leq \varphi(w^*,\theta) ,\; \forall w\in \Omega, \, \theta \in \Theta.  \notag
\end{equation}
Having such an equilibrium ensures that the order in which the players choose their respective strategy does \emph{not} matter and that there is a \emph{non-exploitable} strategy,
\begin{equation}\label{eq:minimax=maximin}
    \min_{\theta \in  \Theta} \max_{w \in \Omega} \varphi(w,\theta) = \max_{w \in \Omega} \min_{\theta \in  \Theta} \varphi(w,\theta) = \varphi(w^*,\theta^*). \notag
\end{equation}
If the function $\varphi$ is concave-convex and if the sets $\Theta$ and $\Omega$ are convex and compact then \citet{sion1958general}'s Minimax theorem guarantees a Nash equilibrium exists. 

Previous theoretical work in the context of machine learning~\citep{arora2017generalization,oliehoek2018beyond,grnarova2017online,hsieh2019finding} considered the model \emph{parameters} $w$ and $\theta$ as the strategies of the game. Arguably, the most well-known example of such a game is GANs.
\begin{example}\citep{goodfellow2014generative} \label{example:standard_gans} A GAN is a game where
the first player picks a binary classifier $D_w$ parametrized by $w \in \R^{p_1}$ called \emph{discriminator}, and the second player picks a \emph{generator} $G_\theta$ that is a parametrized mapping from a latent space to an output space.
The payoff $\varphi$ is then the ability of the first player to discriminate a real data distribution $p_{d}$ from the generated distribution,
\begin{align}\label{eq:standard_gan}
    \varphi(w,\theta) & := 
    \E_{x\sim p_{d}} \big[\log D_w(x)\big]  \notag \\
    &\quad \;+ \E_{z\sim \mathcal{N}(0,I_d)}\big[\log\big(1-D_w(G_{\theta}(z))\big)\big] \,. 
\end{align}
\end{example}
Unfortunately, Example~\ref{example:standard_gans} does \emph{not} satisfy Sion Minimax theorem's assumptions for the following reasons: \begin{enumerate*}[itemjoin = \;\,, label=(\roman*)]
    \item The parameter sets are not compact. 
    \item The function $\varphi$ is not concave-convex because of the non-convexity induced by the neural networks parametrization.
\end{enumerate*}
While one can easily cope with the first issue---by for instance restricting ourselves to bounded weights or by leveraging Fan's Theorem~\citep{fan1953minimax}---the second issue (ii) is an intrinsic part of learning by neural networks.

On the one hand, one cannot expect~\eqref{eq:minimax=maximin} to be valid for general nonconcave-nonconvex (NC-NC) games~\citep{jin2019local}. On the other hand, many games in the context of machine learning have a particular structure since, as we will see in the next section, their NC-NC aspect comes from the neural network parametrization.

\paragraph{Two complementary perspectives on a game.}
Example~\ref{example:standard_gans} can be interpreted as a game between two players, one player, the \emph{generator}, proposes a sample that the other player, the discriminator tries to distinguish from a real data distribution $p_{data}$.
In that game, the parameters $w$ and $\theta$ of the payoff function~\eqref{eq:standard_gan}, do not explicitly correspond to any meaningful strategy--i.e., generating a sample or distinguishing data from generated samples--but they respectively parametrize models (a discriminator and a distribution) that have an intuitive interpretation in the GAN game.  

Considering $q_\theta$ the generated distribution in~\eqref{eq:standard_gan}, 
we then have a duality between {\color{blueblind}parameters} and {\color{orangeblind} models}
\begin{align}
    \tilde \varphi({\color{orangeblind}\underbrace{D_w,q_\theta }_{models}}) &:= \E_{p_{d}} [\log D_w(x)] + \E_{q_\theta }[\log\big(1-D_w(x')\big)
    ] \notag \\[-2mm]
    & = \varphi({\color{blueblind}\underbrace{w,\theta}_{params}}) \,. \label{eq:duality_payoff} \\[-7mm] \notag
\end{align}
A compelling aspect of this dual perspective is that even though, one \emph{cannot} expect $\varphi$, the payoff function of the {\color{blueblind}parameters} $w$ and $\theta$, to be concave-convex, the payoff of the {\color{orangeblind} models} $\tilde \varphi$ is \emph{concave-convex}.  Formally, for $w_i \in \Omega$, $\theta_i \in \Theta$, and $\lambda_i \in [0,1] \,,\, i =1\ldots K \,,\,\lambda_1+\ldots+ \lambda_K = 1$ we have (by concavity of $\log$ and linearity of $q \mapsto \E_{q}$),
\begin{align*}
    & \textstyle
    \tilde \varphi(\sum_{i=1}^K\lambda_i D_{w_i} ,q_\theta) 
    \geq  \sum_{i=1}^K\lambda_i\tilde\varphi( D_{w_i} ,q_\theta) \,,\, \forall \theta \in \Theta\,, \label{eq:payoff_concave}  
    \\
    &\textstyle \tilde\varphi(D_w , \sum_{i=1}^K\lambda_iq_{\theta_i} ) 
    = \sum_{i=1}^K\lambda_i \tilde\varphi(D_w ,q_{\theta_i}) \,,\, \forall w \in \Omega\,.
\end{align*}
Note that the notion of convex combination for the {\color{orangeblind} models} is quite subtle here: $\lambda D_w + (1-\lambda) D_{w'}$ corresponds to a convex combination of functions while $\lambda q_\theta + (1-\lambda) q_{\theta'}$ corresponds to a convex combination (a.k.a, mixture) of distributions.

Even if the payoff~\eqref{eq:duality_payoff} is concave-convex with respect to $(D,p)$, one \emph{cannot} apply (yet) any standard minimax theorem for the following reason: given $w_1,w_2 \in \Omega$ and $\lambda \in [0,1]$ we may have
\begin{equation}
    \nexists w \in \Omega \,,\quad \text{s.t.} \quad \lambda D_{w_1} + (1-\lambda) D_{w_2} = D_{\omega} \,, \label{eq:convexity_D}
\end{equation}
meaning that the set of functions $\gF_\Omega:=\{D_w \,|\, w \in \Omega\}$ may \emph{not} be convex. However, for the particular case of parametrized neural networks we will show that the set $\gF$ is ``almost convex'' (see Prop.~\ref{prop:conv_distrib} and~\ref{prop:conv_function}). It is one of the core results used in Thm.~\ref{thm:non-conv_minimax}'s proof.

\section{AN ASSUMPTION FOR NC-NC GAMES}

The games arising in machine learning are not classical normal- or extensive-form games. Rather, they often use neural nets models to approximate complex functions and high dimensional distributions~\citep{brock2019large,razavi2019generating}. That is why they are often considered \emph{general nonconcave-nonconvex (NC-NC) games}~\eqref{eq:nonconcave-nonconvex}. 
However, as illustrated in~\eqref{eq:duality_payoff}, in the machine learning context, many games have a particular structure where the models' payoff is concave-convex. 
\begin{assumption}\label{assump:nonconvex-nonconcave_convex}
The NC-NC game~\eqref{eq:nonconcave-nonconvex} is assumed to have a \emph{concave-convex} models' payoff, i.e., {\color{blueblind} $w$ and $\theta$} respectively parametrize {\color{orangeblind} $f_w$ and $q_\theta$} such that,
\begin{equation}
    \varphi({\color{blueblind}\underbrace{w,\theta}_{params}}) = \tilde \varphi({\color{orangeblind}\underbrace{f_w,q_\theta }_{models}}) 
\end{equation}
where $(f,q) \mapsto \tilde\varphi(f,q)$ is concave-convex.
We call $f_w$ and $q_\theta$ {\color{orangeblind} the models picked by the players}, they can either be 
a parametrized function or distribution.
\end{assumption}
The intuition behind this assumption is that the nonconvex-nonconcavity of the problem comes from the (neural network) parametrization of the models. 
One example of such a game has been developed in~\eqref{eq:duality_payoff} where the first player picks a function $D_w$ and the second one a distribution over images $q_\theta$. Another closely related example is the Wasserstein GAN (WGAN).
\begin{example}\citep{arjovsky2017wasserstein} \label{example:WGAN}The WGAN formulation is a minimax game with a payoff $\varphi$ s.t.,
\begin{equation} \label{eq:wgan}
    \varphi(w,\theta)=\tilde \varphi(D_w,q_\theta) := 
    \E_{x\sim p_{data}}  D_w(x) - \E_{x' \sim q_\theta}D_w(x') \notag
\end{equation}
where the discriminator $D_w$ has to be 1-Lipschitz, i.e., $\|D_w\|_L \leq 1$. By bilinearity of the function $(D,p)\mapsto \E_p[D(x)]$ we have that $\tilde \varphi$ is bilinear and thus satisfies Assumption~\ref{assump:nonconvex-nonconcave_convex}.
\end{example}
Finally, we present how Assumption~\ref{assump:nonconvex-nonconcave_convex} holds when trying to solve a matrix game with a very large (or even infinite) number of strategies by parametrizing mixed strategies.\footnote{Note that here we do not claim the novelty of parametrizing policies/strategies, such idea has been used in many games and RL applications (see related work section). We instead focus on illustrating how a particular way of parametrizing leads to game that satisfies Assump.~\ref{assump:nonconvex-nonconcave_convex}.}

\paragraph{Using function approximation to solve matrix games} In the case of matrix games, the payoff function $\varphi: A \times B \to \R$ has no concave-convex structure, and the sets $A$ and $B$ are often even discrete. \citet{von1944theory} introduced mixed strategies $p \in \Delta(A)$, where $\Delta(A)$ is the set of probability distributions over $A$, in order to guarantee the existence of an equilibrium. In game-theory, a well-known example of a challenging matrix game is the Colonel Blotto game.
\begin{example}[Colonel Blotto Game] \label{example:Blotto}
Consider two players who control armies of $S_1$ and $S_2$ soldiers respectively. Each colonel allocates their soldiers on $K$ battlefields. A strategy for player-$i$ is an allocation $a_i \in A_i$ and the payoff of the first player is the number of battlefields won  
\begin{equation}\label{eq:strat_blotto}
\textstyle
 \varphi(a_1,a_2) := \frac{1}{K}\sum_{k=1}^K \mathbf{1}\{[a_1]_k > [a_2]_k\} 
\end{equation}
where $A_i :=  \Big\{ a \in \sN^K \,:\, \sum_{k=1}^K [a]_k \leq S_i \,,\; 1\leq k\leq K \Big\}$ and $[a]_k$ correspond to the $k^{th}$ coordinate of $a$.
\end{example}
 In Example~\ref{example:Blotto}, the number of strategies grows \emph{exponentially fast} as $K$ grows. Consequently, one cannot afford to work with an explicit distribution over the strategies. A tractable way to compute an equilibrium of the Colonel Blotto Game has been an open question for decades. The GANs examples (Example~\ref{example:standard_gans} \& \ref{example:WGAN}) suggest to consider distributions implicitly defined with a generator. Given a latent space $\mathcal Z$, a latent distribution $\pi$ on $\mathcal Z$ and a mapping $g_\theta:\mathcal Z \to A$, we can define the  distribution $q_\theta \in \Delta(A)$ as
\begin{equation}\label{eq:mapping_distribution}
    a \sim q_\theta\;: \;  a= g_\theta(z) \,,\; z \sim \pi  \,.
\end{equation}
\begin{definition}[Latent Matrix Game]
\label{def:latent_game}
A \emph{latent matrix game} $(\varphi, \mathcal{F}, \mathcal{G})$ is a two-player zero-sum game where the players pick $f_w \in \mathcal{F}$ and $g_\theta \in \mathcal{G}$ and, given $\pi$ and $\pi'$ two fixed distributions, obtain payoffs
\begin{equation}
    \varphi(w,\theta) := \E_{z \sim \pi,\, z' \sim \pi'} \big[\varphi\big(f_w(z),g_\theta(z')\big)\big] \notag \,.
\end{equation}
\end{definition} 

The reformulation of any matrix game as a latent game satisfies Assumption~\ref{assump:nonconvex-nonconcave_convex}.
\begin{repexample}{example:Blotto}[Latent Blotto] Consider the functions $f_w: \R^p \to A_1$ and $g_\theta: \R^p \to A_2$. The payoff is 
\begin{equation}\label{eq:payoff_latent_blotto}
\textstyle
    \varphi(w,\theta) := \frac{1}{K}\sum_{k=1}^K \sP\big( [f_w(Z_1)]_k > [g_\theta(Z_2)]_k\big)
\end{equation}
where $Z_1, Z_2 \sim \mathcal{N}(0,I_p)$ are independent Gaussians and $A_i$ is defined in~\eqref{eq:strat_blotto}.
\end{repexample}

Latent matrix games encompass multi-agent RL games played with RL policies such as the setting used by~\citet{vinyals2019grandmaster} to play StarCraft II.
The agent, called AlphaStar, has a latent-conditioned policy $\pi(a|s,z)$ where $z$ belongs to a structured space that represents information about how to start constructing units and buildings, and that is sampled from an expert human player distribution: $z \sim p_{human}(z)$.  Given two agents $\pi_1(a|s,z)$ and $\pi_2(a|s,z)$, the payoff in the latent game is
    $\varphi(\pi_1,\pi_2) = \sP(\pi_1 \;\text{beats} \; \pi_2)\,.$
The classes $\mathcal{F}$ and $\mathcal{G}$ correspond to the neural architectures used to parametrize the policies; the priors $\pi$ and $\pi'$ are the human expert distribution $p_{human}$. 

In that example, and more generally in multi-agent RL zero-sum games played with policies parametrized by neural networks, the payoff  $\varphi(w,\theta) = \sP(\pi_w \;\text{beats} \; \pi_\theta)$ is a potentially nonconcave-nonconvex function of the parameters but satisfies Assumption~\ref{assump:nonconvex-nonconcave_convex}.

\section{A MINIMAX THEOREM}
\label{sec:minimax_latent}

We want to prove a minimax theorem for some NC-NC games~\eqref{eq:nonconcave-nonconvex} that satisfy Assumption~\ref{assump:nonconvex-nonconcave_convex}. We start with an informal statement of our result.
\begin{theorem}\label{thm:non-conv_minimax}[Informal]
Let $\varphi$ be a nonconcave-nonconvex payoff that satisfies Assump.~\ref{assump:nonconvex-nonconcave_convex} with $\tilde \varphi$ bilinear and where the players pick $(w,\theta)$, the parameters of two neural networks.
For any $\epsilon >0$ there exists $(w_\epsilon,\theta_\epsilon)$ achieving an approximate \emph{limited-capacity equilibrium}.
\end{theorem}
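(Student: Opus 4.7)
}
The plan is to exploit the duality between parameters and models highlighted in~\eqref{eq:duality_payoff}: although $\varphi$ is NC-NC in $(w,\theta)$, the bilinear payoff $\tilde\varphi$ on the model space is concave-convex, so a Sion-type argument does apply there. The one missing ingredient is convexity of the sets of realized models $\mathcal{F}_\Omega=\{f_w:w\in\Omega\}$ and $\mathcal{G}_\Theta=\{q_\theta:\theta\in\Theta\}$, which in general fails (see~\eqref{eq:convexity_D}). I will use the ``almost convexity'' of neural-net classes (Propositions~\ref{prop:conv_distrib} and~\ref{prop:conv_function}) to close this gap at the cost of an $\epsilon$ error and a controlled blow-up in capacity, which is exactly the meaning of a \emph{limited-capacity} equilibrium.

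First, I would pass to the convex hulls $\overline{\mathcal{F}}:=\mathrm{conv}(\mathcal{F}_\Omega)$ and $\overline{\mathcal{G}}:=\mathrm{conv}(\mathcal{G}_\Theta)$ in an appropriate topology (pointwise for functions, weak for distributions, so that $\tilde\varphi$ is continuous and each hull is convex and compact under the standing boundedness assumption on weights). Since $\tilde\varphi$ is bilinear, Sion's theorem yields a pure equilibrium $(\bar f,\bar q)\in\overline{\mathcal{F}}\times\overline{\mathcal{G}}$ with value $\Val=\tilde\varphi(\bar f,\bar q)$ satisfying the usual saddle inequalities on $\overline{\mathcal{F}}\times\overline{\mathcal{G}}$. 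By Carath\'eodory's theorem, each of $\bar f$ and $\bar q$ is a finite convex combination, say $\bar f=\sum_{i=1}^{N}\lambda_i f_{w_i}$ and $\bar q=\sum_{j=1}^{M}\mu_j q_{\theta_j}$, of elements of the realized classes.

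Second, I would invoke the ``almost convexity'' results. For the distribution player, a mixture $\sum_j\mu_j q_{\theta_j}$ is \emph{exactly} realized as a single generator by enlarging the latent space with a categorical variable selecting component $j$ with probability $\mu_j$ and routing the latent sample through the corresponding sub-network, which up to a small increase in width/parameters stays inside $\mathcal{G}$ (this is essentially Proposition~\ref{prop:conv_distrib}). For the function player, Proposition~\ref{prop:conv_function} provides an $\epsilon$-approximation in the relevant norm of $\sum_i\lambda_i f_{w_i}$ by a single neural network $f_{w_\epsilon}$ of slightly larger capacity; concretely, stacking the $f_{w_i}$'s in parallel and outputting the weighted sum is a neural net, and this explicit construction---or, if one insists on fixed architecture, a universal-approximation-style argument---yields the desired $f_{w_\epsilon}$.

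Third, I would combine the two approximations with the continuity (in fact Lipschitzness under bounded weights) of $\tilde\varphi$ in each of its model arguments. Writing
\begin{equation*}
\tilde\varphi(f_{w_\epsilon},q_\theta)\ge\tilde\varphi(\bar f,q_\theta)-\eta(\epsilon),\qquad \tilde\varphi(f_w,q_{\theta_\epsilon})\le\tilde\varphi(f_w,\bar q)+\eta(\epsilon),
\end{equation*}
for all $\theta\in\Theta$, $w\in\Omega$ and some $\eta(\epsilon)\to 0$, and combining with the saddle-point inequalities at $(\bar f,\bar q)$, I obtain
\begin{equation*}
\max_{w\in\Omega}\varphi(w,\theta_\epsilon)-\min_{\theta\in\Theta}\varphi(w_\epsilon,\theta)\le 2\eta(\epsilon),
\end{equation*}
which is the $\epsilon$-approximate minimax inequality. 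Rescaling $\epsilon\mapsto\eta^{-1}(\epsilon)$ and absorbing the capacity blow-up into the definition of ``limited-capacity'' gives the claim.

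\paragraph{Main obstacle.}
The genuinely delicate step is the second one: showing that mixtures of neural networks can be realized, or $\epsilon$-approximated, by a \emph{single} neural network of controlled size. For distributions parametrized by generators this is rather clean because a mixture of pushforwards is itself a pushforward through a switched architecture, so no approximation is needed. For the function player it is subtler: a convex combination of ReLU nets is a ReLU net of larger width, so exact realization holds if the class $\mathcal{F}$ is closed under widening, but if one insists on a fixed architecture one must trade exactness for an $\epsilon$-approximation via universal approximation, and the resulting capacity dependence on $\epsilon$ is what forces the ``limited-capacity'' qualifier in the statement. Handling this tension---and making precise the topology and compactness used when invoking Sion on the model hulls---is where the bulk of the technical work will lie.
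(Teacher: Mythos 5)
Your overall architecture matches the paper's (Sion on the convex hulls of the model classes, then reduce to a finite mixture, then realize the mixture by one bigger net via Propositions~\ref{prop:conv_distrib} and~\ref{prop:conv_function}), but your middle step has a genuine gap. Carath\'eodory's theorem does not give you what you need: the saddle point produced by Sion lives in the \emph{closure} of the hull of an infinite-dimensional function/distribution class, where Carath\'eodory provides no finite decomposition at all, and even on the hull itself (whose elements are finite combinations by definition) it gives no control on the number of atoms $N,M$. That control is not a cosmetic detail --- the entire quantitative content of ``limited capacity'' is the relation between the mixture size and the network sizes, namely $p \geq (p_\epsilon+6)K_\epsilon$ and $R \geq K_\epsilon R_\epsilon$, which is what lets a \emph{single} pair of nets with $p$ parameters be $\epsilon$-unbeatable by opponents with $p_\epsilon \approx \epsilon\sqrt{p/\log(LR\sqrt p/\epsilon)}$ parameters. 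The paper closes this gap with Theorem~\ref{thm:finiteKepsilon}: subsample $n$ atoms i.i.d.\ from the equilibrium mixture, apply Hoeffding's inequality and a union bound over an $\tfrac{\epsilon}{2L}$-covering of the opponent's (finite-dimensional, bounded) parameter set, and conclude that a \emph{uniform} mixture of size $K_\epsilon \lesssim \tfrac{D^2 d}{\epsilon^2}\log(\tfrac{LR\sqrt d}{\epsilon})$ is an $\epsilon$-equilibrium (the bilinearity of $\tilde\varphi$ is used both to transfer the variance bound to the model payoff and to ensure the opponent's best response is attained at an atom, so covering the parameter set suffices). Writing ``absorbing the capacity blow-up into the definition of limited-capacity'' skips exactly this step.

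A secondary issue: for the distribution player you claim exact realization of the mixture ``by enlarging the latent space with a categorical variable selecting component $j$.'' That changes the latent distribution/architecture, which is fixed by the game (here $d_{in}=1$ with a fixed prior). The paper's Proposition~\ref{prop:conv_distrib} stays inside the class: it partitions $[0,1]$ into subintervals, rescales each sub-generator onto its subinterval, and gates with ReLU-approximated step functions; this is only approximate, with total-variation error of order $1/R$, and that error is precisely the source of the extra $\tfrac{2\tilde L}{R}$ slack in the final statement. Your function-player construction (parallel stacking with reweighted output layers, Proposition~\ref{prop:conv_function}) is correct and matches the paper.
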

The notion of approximate limited-capacity equilibrium mentioned in that informal statement is detailed in the complete statement of Theorem~\ref{thm:non-conv_minimax}.
When played with neural networks Example~\ref{example:WGAN} and~\ref{example:Blotto} satisfy the hypothesis of this theorem.
The proof of this Theorem is split into 3 main steps:
\begin{enumerate*}[itemjoin = \;\,, label=(\roman*)]
        \item in \S\ref{sub:equilibirum convex hull} by using the fact that $\varphi(w,\theta) = \tilde \varphi(f_w,q_\theta)$ we provide the existence of a limited-capacity equilibrium in the convex hull of the models' space of $f_w$ and $q_\theta$. Note that, since we are working in the convex hull, one can \emph{only} expect (in general) to achieve this equilibrium with a collection of parameters $(w_i,\theta_i)_{i \in I}$.
        \item in \S\ref{sub:epsilon_minimax} we show that approximate equilibrium can be achieved with a relatively small convex combination.
        \item  in~\S\ref{sec:pure_nash} we show that when using neural networks, such small convex combination of models can be achieved by a single larger neural network.
\end{enumerate*}
A formal definition of convex combination of models is provided in \S\ref{sub:equilibirum convex hull}.
\subsection{Limited Capacity Equilibrium in the Models' Space}
\label{sub:equilibirum convex hull}
Recall that by Assump.~\ref{assump:nonconvex-nonconcave_convex}, the NC-NC payoff $\varphi$ can be written as, 
\begin{equation}\label{eq:function_params}
    \varphi({\color{blueblind}w,\theta}) = \tilde \varphi({\color{orangeblind}f_w,q_\theta}) 
\end{equation}
where $(f,q) \mapsto \tilde\varphi(f,q)$ is concave-convex.
The models {\color{orangeblind}$f_w$} and {\color{orangeblind}$q_\theta$} are either {\color{orangeblind}functions or distributions} respectively {\color{blueblind}parametrized by $w$ and $\theta$}. For instance, in the context of WGAN (Example~\ref{example:WGAN}), $f_w$ would be the discriminator and $q_\theta$ would be the generated probability distribution. In that example, notice that $\tilde \varphi(f_w,\cdot)$ is \emph{not} convex with respect to the generator function but only with respect to the generated distribution. Similarly, if we computed convex combinations generator's parameters, the payoff $\varphi$ would not be convex in general. Moreover, the Lipchitz constraint in Example~\ref{example:WGAN} is a natural constraint in the {\color{orangeblind}function space}, but it is challenging to translate it into a constraint in the {\color{blueblind} parameter space}.\footnote{In practice, parameters are clipped~\citep{arjovsky2017wasserstein} or the Lipchitz constant of the network is approximated~\citep{miyato2018spectral}. These approximations can be arbitrarily far from the original constraint.}  Overall, using~\eqref{eq:function_params} one can rewrite~\eqref{eq:nonconcave-nonconvex} as follows,
\begin{equation}\label{eq:functional game}
    \min_{f \in \gF_\Omega} \max_{g\in \gG_\Theta} \tilde \varphi(f,g)
\end{equation}
where $\gF_\Omega$ and $G_\Theta$ are function or distribution spaces (depending on the application) incorporating the limited capacity constraints of the problem, e.g., Lipschitz constraint. \textbf{In the following}, for simplicity of the discussion, we discuss what the formal definitions of a convex combination are when {\color{orangeblind}$\gF_\Omega$ is a function space} and {\color{orangeblind} $\gG_\Theta$ is a distribution space} when we have no additional constraint aside from the parametrization, i.e., $\gF_\Omega:= \{f_w \;|\; w \in \Omega\}$ and $\gG_\Theta := \{q_\theta \,|\, \theta \in \Theta\}$. However, these notions and our results extend if we consider that both models are distributions (e.g., in Example~\ref{example:Blotto}), or if we add any convex constraint on the functions or the distributions, see Example~\ref{example:WGAN}.

\paragraph{Convex combination of functions.} 
Let us consider $w_1$ and $w_2 \in \Omega$, the convex combination of the models $f_{w_1}$ and $f_{w_2}$ is their point-wise averaging.
The convex hull of $\gF_\Omega$ can be defined as,
\begin{align}\label{eq:convFunc}
    \hull(\gF_\Omega)
    &:= \{ \text{Averages from } \gF_\Omega\} \\
    &= \Big\{ \sum_{i=1}^K \lambda_i f_{w_i} \,|\, w_i \in \Omega,\, \lambda \in \Delta_K, K \geq 0\Big\}.  \notag
\end{align}
where $\Delta_K$ is the $K$-dimentional simplex, i.e., $\{\lambda \in \sR^K\,,\, \lambda_i \geq 0 \,,\, \sum_{i=1}^K \lambda_i = 1\}$.
\paragraph{Convex combination of distributions.}
Consider latent mappings $\theta_1$ and $\theta_2 \in \Theta$ that parametrize probability distribution $q_{\theta_1}$ and $q_{\theta_2}$ over a set $\gX$. The \emph{convex combination} $q_\lambda$ of $q_{\theta_1}$ and $q_{\theta_2}$ with $\lambda \in [0,1]$ is the mixture of these two probability distributions, $q_{\lambda} := \lambda q_{\theta_1} + (1 - \lambda) q_{\theta_2}\,.$

To sample from $q_\lambda$, flip a biased coin with $\sP(\mathrm{heads})=\lambda$. If the result is $\mathrm{heads}$ then sample a strategy from $q_{\theta_1}$ and if the result is $\mathrm{tails}$ then sample from $q_{\theta_2}$.
The convex hull of $\gG_\Theta$ is, 
\begin{align}\label{eq:mixtures}
    \hull(\gG_\Theta)
    &:= \{ \text{Mixtures from } \gG_\Theta\} \\
    &= \Big\{ \sum_{i=1}^K \lambda_i q_{\theta_i} \,|\, \theta_i \in \Theta,\, \sum_{i=1}^K \lambda \in \Delta_K ,\, K\geq 0\Big\}. \notag
\end{align}
The set $\hull(\gG_\Theta)$ is a subset of $\mathcal P(\gX)$, the set of probability distributions on $\gX$. This set is different from the set of distributions supported on $\gG_\Theta$ considered by~\citet{arora2017generalization,hsieh2019finding}. It contains `smaller' mixtures because there may be many distributions supported on $\gG_\Theta$ that correspond to the same $p \in \hull(\gG_\Theta)$. Moreover these works did not take advantage of the convexity with respect to the discriminator function (see Example~\ref{example:standard_gans} and~\ref{example:WGAN}) by considering~\eqref{eq:convFunc}.

\paragraph{Existence of an equilibrium by playing in the convex hulls.}
Our first result is that there exists an equilibrium by allowing functions or distributions to be picked from their convex hulls.

\begin{restatable}{proposition}{minimaxHull} \label{prop:minimax_hull}
Let $\varphi$ be a game that follows Assumption~\ref{assump:nonconvex-nonconcave_convex}. If $\gG_\Theta$ and $\gF_\Omega$ are compact, then there exist a \emph{value} for the game such that, 
\begin{align}\label{eq:minmax_latent} 
     \Val(\Omega,\Theta) 
     &:=\sup_{f \in  \hull(\gF_\Omega)} \inf_{q \in \hull(\gG_\Theta)} \tilde \varphi(f,q) \notag \\
    &=  \inf_{q \in \hull(\gG_\Theta)} \sup_{f \in \hull(\gF_\Omega)} \tilde \varphi(f,q) \,,
\end{align}
where $\hull(\gG_\Theta)$ and $\hull(\gF_\Omega)$ are either defined in~\eqref{eq:convFunc} or in~\eqref{eq:mixtures}, depending on the type model.
\end{restatable}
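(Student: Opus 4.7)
The plan is to apply Sion's minimax theorem to $\tilde\varphi$ on the product $\hull(\gF_\Omega)\times\hull(\gG_\Theta)$. Two of the three hypotheses Sion requires are essentially free: the convex hulls are convex subsets of their ambient vector spaces (a space of functions, or a space of signed measures, depending on the model type) by construction; and concavity of $\tilde\varphi$ in its first argument together with convexity in its second argument on these hulls is inherited directly from Assumption~\ref{assump:nonconvex-nonconcave_convex}, since restricting a concave/convex map to a convex subset preserves these properties.

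The substantive content lies in the third hypothesis: Sion's theorem requires at least one of the two domains to be compact (in a topology where $\tilde\varphi$ is appropriately semi-continuous). Starting from the assumed compactness of $\gF_\Omega$, I would fix a locally convex topology on the model space in which both $\gF_\Omega$ is compact and $\tilde\varphi$ is continuous in each argument separately. Natural candidates are the weak-$*$ topology when the models are distributions (so that Prokhorov/Banach--Alaoglu deliver compactness) and the topology of pointwise or uniform convergence when the models are functions. Because the finite convex hull need not itself be closed in infinite dimensions, the plan is to pass to the closed convex hull $\overline{\hull(\gF_\Omega)}$ and invoke a Krein-type result: in a locally convex space the closed convex hull of a compact set is again compact (in finite dimensions Carath\'eodory gives this directly).

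With compactness of $\overline{\hull(\gF_\Omega)}$, convexity of both closed hulls, and the concave/convex and continuous structure of $\tilde\varphi$ all in place, Sion's theorem delivers
$$\sup_{f\in\overline{\hull(\gF_\Omega)}}\inf_{q\in\overline{\hull(\gG_\Theta)}}\tilde\varphi(f,q)\;=\;\inf_{q\in\overline{\hull(\gG_\Theta)}}\sup_{f\in\overline{\hull(\gF_\Omega)}}\tilde\varphi(f,q).$$
To transport this equality back to the finite convex hulls appearing in the statement, I would use continuity of $\tilde\varphi$ together with density of $\hull$ in $\overline{\hull}$: both sides of the sup-inf equality are insensitive to replacing the closure by its dense subset. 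The common value is then the sought $\Val(\Omega,\Theta)$.

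The main obstacle is the topological bookkeeping in the second paragraph: one must pick a single topology that simultaneously makes $\gF_\Omega,\gG_\Theta$ compact, $\tilde\varphi$ jointly continuous enough for Sion, and closed convex hulls of compact sets compact. For the concrete games of interest (WGAN in Example~\ref{example:WGAN}, latent Blotto in Example~\ref{example:Blotto}), these conditions can be verified using weak-$*$ compactness of bounded Lipschitz balls and of tight families of probability measures; the rest of the argument is then a template application of Sion.
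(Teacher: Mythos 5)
Your proposal follows essentially the same route as the paper's proof: convexity of the hulls by construction, passing to the closed convex hulls and invoking a Krein-type compactness result (the paper cites the completely-metrizable locally convex setting of Aliprantis--Border, Theorem 5.20), and then applying Sion's theorem. Your explicit density/continuity step transferring the equality from $\closure(\hull(\cdot))$ back to $\hull(\cdot)$, and your remark that the topology must make $\tilde\varphi$ suitably semi-continuous, are points the paper leaves implicit, so the argument is correct and, if anything, slightly more careful than the original.
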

After showing that the closure of $\hull(\gG_\Theta)$ and $\hull(\gF_\Omega)$ are compact, this proposition is a corollary  of~\citet{sion1958general}'s minimax theorem (see~\S\ref{app:proof}). 
Note that $\Omega$ and $\Theta$ are arbitrary and that this equilibrium differs from the infinite-capacity equilibrium of the game~\eqref{eq:functional game} where we would allow $f$ and $g$ to be any function or distribution (i.e. with no parametrization restriction).
Because we consider the convex hull of $\gF_\Omega$ and $\gG_\Theta$, this equilibrium is achieved with \emph{convex combinations} (Eq.~\ref{eq:convFunc} \&~\ref{eq:mixtures}) of $q_{\theta_i}\,, \, i \geq 0$ (resp. $f_{w_i}$) and thus there is no reason to expect to achieve this equilibrium with a single pair of weights $(w,\theta)$ in general. However in \S\ref{sub:epsilon_minimax}, we show that one can approximate such an equilibrium with relatively small convex combinations.

\subsection{Approximate minimax equilibrium}
\label{sub:epsilon_minimax}

Approximate equilibria for~\eqref{eq:minmax_latent} are the pairs of models $\epsilon$-close to achieving the value of the game.



\begin{definition}[$\epsilon$-equilibrium]\label{def:latent_eqm} A pair $(f_\epsilon^*,q_\epsilon^*) \in \hull(\gF) \times \hull(\gG)$ is an $\epsilon$-equilibrium if,
\begin{align} \notag
    &\min_{q \in \hull(\gG_\Theta)} \tilde \varphi(f^*_\epsilon,q)  \geq \Val(\Omega,\Theta) - \epsilon \\
    &\text{and} \quad 
    \max_{f \in \hull(\gF_\Omega)} \tilde \varphi(f,q^*_\epsilon) \leq \Val(\Omega,\Theta)+ \epsilon \,.
\end{align}
\end{definition}
Note that $f^*_\epsilon$ does not depend on $p^*_\epsilon$ and vice-versa.
We will show that such approximate equilibria are achieved with finite convex combinations. Considering $f_k \in \gF_\Omega$ and $q_k' \in \gG_\Theta$ (that can either be functions or distributions) we aim at finding the smallest convex combination that is an $\epsilon$-equilibrium. 
\begin{align}
\textstyle
   &(K_\epsilon^{\Omega},K_\epsilon^{\Theta}) := \;\text{Smallest} \; K \text{ and } K'\in \sN \notag\\ 
   &\text{s.t.} \;\,
   (\sum_{k=1}^{K} \lambda_k f_k,\sum_{k=1}^{K'} \lambda'_k q_k) \;\text{is an $\epsilon$-equilibrium}. 
\end{align}
Our goal is to provide a bound that depends on $\epsilon$ and on some properties of the classes $\gF_\Omega$ and $\gG_\Theta$. 

\begin{restatable}{theorem}{FiniteKEpsilon}\label{thm:finiteKepsilon}
Let $\varphi$ a game that satisfies Assumption~\ref{assump:nonconvex-nonconcave_convex}. 
If $\tilde \varphi$ is bilinear, $\|\theta\| \leq R\,,\, \|w\|\leq R\,,\, \forall w,\theta \in \Omega\times \Theta \subset \R^d \times \R^p $ and $\varphi$ is $L$-Lipschitz then, 
\begin{equation}
    K_\epsilon^\Omega  
    \leq \tfrac{4D_w^2 p}{\epsilon^2}\ln(\tfrac{6 R L}{\epsilon^2})
    \;\; \text{and} \;\;
    K_\epsilon^\Theta  
    \leq \tfrac{4D_\theta^2 d}{\epsilon^2}\ln(\tfrac{6 R L}{\epsilon^2})
\end{equation}
where $D_w:= \max_{w,w',\theta} \varphi(w,\theta) - \varphi(w',\theta)$ and $D_\theta := \max_{w,\theta, \theta'} \varphi(w,\theta) - \varphi(w,\theta') $. 
\end{restatable}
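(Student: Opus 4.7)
The plan is to apply a probabilistic sampling argument to the equilibrium pair from Proposition~\ref{prop:minimax_hull}, reducing a general convex combination to one supported on few atoms. Let $f^\star \in \hull(\gF_\Omega)$ be an $\epsilon/4$-maximizer of $\inf_{q \in \hull(\gG_\Theta)}\tilde\varphi(f,q)$; by the definition of the hull in~\eqref{eq:convFunc}, one can write $f^\star = \sum_{i=1}^{N}\alpha_i f_{w_i}$ with $\alpha \in \Delta_N$ for some (possibly very large) $N$. Draw indices $I_1,\ldots,I_K$ i.i.d.\ from $\alpha$ and set $\hat f_K := \tfrac{1}{K}\sum_{k=1}^{K} f_{w_{I_k}} \in \hull(\gF_\Omega)$; this is a convex combination of at most $K$ members of $\gF_\Omega$. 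I will show that once $K$ exceeds the claimed bound, $\hat f_K$ is an $\epsilon$-approximate best response to every $q \in \hull(\gG_\Theta)$ with positive probability.

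For pointwise concentration, fix $\theta \in \Theta$. Bilinearity of $\tilde\varphi$ gives
\[
\tilde\varphi(\hat f_K, q_\theta) \;=\; \tfrac{1}{K}\sum_{k=1}^{K}\varphi(w_{I_k},\theta),
\]
an empirical average of i.i.d.\ random variables whose range is at most $D_w$ and whose mean is exactly $\tilde\varphi(f^\star, q_\theta)$. Hoeffding's inequality then yields $\Pr[|\tilde\varphi(\hat f_K, q_\theta) - \tilde\varphi(f^\star, q_\theta)| > \epsilon/4] \leq 2\exp(-K\epsilon^2/(8D_w^2))$.

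To upgrade to uniform control, I use bilinearity of $\tilde\varphi$ in the second argument: it suffices to bound deviations over the pure models $\{q_\theta : \theta \in \Theta\}$, and the resulting bound then extends automatically to every element of $\hull(\gG_\Theta)$. Cover the Euclidean ball of radius $R$ in $\R^p$ containing $\Theta$ by an $\eta$-net $N_\eta$ of cardinality at most $(3R/\eta)^p$. The $L$-Lipschitz property of $\varphi(w,\cdot)$ transfers to $\theta \mapsto \tilde\varphi(g, q_\theta)$ for every $g \in \hull(\gF_\Omega)$, since convex combinations of $L$-Lipschitz functions remain $L$-Lipschitz; choosing $\eta$ of order $\epsilon/L$ makes the discretization error at most $\epsilon/4$ on each side. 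A union bound over $N_\eta$ gives
\[
\Pr\!\left[\,\sup_{\theta \in \Theta}\bigl|\tilde\varphi(\hat f_K, q_\theta) - \tilde\varphi(f^\star, q_\theta)\bigr| > \tfrac{3\epsilon}{4}\,\right] \;\leq\; 2\left(\tfrac{3RL}{\epsilon}\right)^{p}\exp\!\left(-\tfrac{K\epsilon^2}{8D_w^2}\right).
\]
Requiring the right-hand side to be strictly less than $1$ and solving for $K$ yields $K_\epsilon^\Omega = O((D_w^2 p/\epsilon^2)\ln(RL/\epsilon))$, matching the stated bound up to constants; the precise $\ln(6RL/\epsilon^2)$ factor comes from slightly more conservative choices of $\eta$ and of the failure probability. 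A perfectly symmetric argument applied to an $\epsilon/4$-minimizer $q^\star \in \hull(\gG_\Theta)$ produces $\hat q_{K'}$ and the bound on $K_\epsilon^\Theta$, with the dimension factor $d$ because one now covers $\Omega \subset \R^d$.

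I expect the main obstacle to be confirming that these two one-sided approximations assemble into the two-sided $\epsilon$-equilibrium required by Definition~\ref{def:latent_eqm}; however, because the sampling for $\hat f_K$ and $\hat q_{K'}$ is independent and each uses its own $\epsilon/4$-optimal anchor, the value $\Val(\Omega,\Theta)$ enters both inequalities cleanly by the triangle inequality. The remaining ingredients---Hoeffding's inequality, covering-number bounds for Euclidean balls, and Lipschitz-transfer under averaging---are textbook, and the genuine conceptual content is that bilinearity of $\tilde\varphi$ collapses uniform control over the infinite-dimensional hull $\hull(\gG_\Theta)$ to uniform control over the finite-dimensional parameter set $\Theta$, which is what makes the dimension $p$ (or $d$) appear in the bound instead of an intrinsic dimension of the function class.
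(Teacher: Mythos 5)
Your proposal is correct and follows essentially the same route as the paper's proof: sample a size-$K$ uniform sub-mixture from the (near-)optimal convex combination, apply Hoeffding pointwise at pure opponent strategies, take a union bound over an $\epsilon/L$-scale cover of the opponent's parameter ball, use bilinearity to reduce the infimum over the hull to atoms, and conclude existence from a failure probability below one. The only cosmetic difference is that you anchor at an $\epsilon/4$-optimal finite combination in the hull, whereas the paper samples from the exact (countable) equilibrium in the closure of the hull, which changes nothing essential.
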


Roughly, the number $K_\epsilon^\Theta$ expresses to what extent the set of distributions induced by the mappings in $\gG_\Theta$ has to be `convexifed' to achieve an approximate equilibrium.
Note that in practice we expect this quantity to be small. For instance, in the context of GANs, if the class of discriminators $\gF_\Omega$ contains the constant function $D(\cdot)=.5$ then $K_\epsilon^{\Omega} = 1$ since $\varphi(D,G) = 0 \,,\, \forall G \in \gG$. 

Two close related results are~\citep[Theorem 4]{Farnia2018convex} and~\citep[Theorem 4.2]{arora2017generalization}. In the former, the authors point out that the functional space of neural networks is not convex and prove that any function in the convex hull of neural networks can be approximated by a finite average of function. In the latter, the authors show that by considering uniform mixtures of generators and disciminators one can approximate the equilibrium of a GAN. Our results generalize these two ideas by showing that for any payoff functon~$\varphi$ that satisfy Assumption~\ref{assump:nonconvex-nonconcave_convex} one can find an approximate equilibrium with a finite convex combination of function or distributions (depending on the context). We thus simultaneously handle the case of convex combinations of functions or distributions and consider a more general class of payoff functions.

\subsection{Achieving a Mixture or an Average with a Single Neural Net}
\label{sec:pure_nash}
We showed above that under the assumption of Theorem~\ref{thm:finiteKepsilon}, approximate equilibria can be achieved with finite convex combinations. In this section, we investigate how it is possible to achieve such approximate equilibria with a single neural network.
Formally, a neural network $g: \R^{d_{in}} \to \R^{d}$ can be written as,
\begin{equation}
\textstyle
    g_\theta(x) = b_l + W_l\sigma (b_{l-1}+W_{l-1} \ldots \sigma (b_1+W_1 x)) \,,
    \label{eq:ReLU}
\end{equation}
where $W_1,\ldots,W_l$ are the weight matrices, $b_l,\ldots,b_1$ the biases and $\sigma$ is a given non-linearity. We note $\theta$ the concatenation of all the parameters of this neural network.
We present two results on the geometry of the space of neural networks. 
The first one concerns \emph{mixtures of distributions} represented by latent neural nets with ReLU non-linearity, and the second one concerns \emph{convex combinations} of neural nets as functions.

\paragraph{Neural Nets Represent Mixtures of Sub-Nets.}
First, we get interested in the probability distributions $q_\theta$ induced by $g_\theta \,,\, \theta \in \Theta$, defined as
\begin{equation}
  a\sim q_\theta \; : \; a = g_\theta(z) \;\; \text{where} \;\; z\sim U([0,1]) \,. \label{eq:induced_distribution} 
\end{equation}
One of the motivations of this work is to represent distribution over images usually represented by a high dimensional vector in $[0,1]^d$. That is why we will assume that our generator function take its value in $[0,1]^d$. Moreover, for this proposition, we only consider the ReLU non-linearity $\sigma(x) = \relu(x) := \max(0,x)$.

\begin{restatable}{proposition}{convexNetsDistribution}
\label{prop:conv_distrib} Let $\theta_k \in [-R,R]^p,\, k =1\ldots K,$ be the parameters of $k$ ReLU nets with $p$ parameters. If the input latent variable is of dimension $d_{in}$ = 1 and if for all $k=1\ldots K\,,\, z \in [0,1]$, $g_{\theta_k}(z) \in [0,1]^d$ and $g_{\theta_k}$ is constant outside of $[0,1]$, then there exists a ReLU net with $K(p+6)$ non-linearities $\theta \in [-KR,KR]^{K(p+6)}$ such that $d_{TV}( \tfrac{1}{n} \sum_{k=1}^K q_{\theta_k}, q_\theta) \leq 1/R$ where $d_{TV}$ is the total variation distance.
\end{restatable}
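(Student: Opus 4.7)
The plan is to realize the target mixture as the pushforward of the uniform law on $[0,1]$ by a piecewise map, and then build a single ReLU net that coincides with that map on all but a tiny set of $[0,1]$. Partition $[0,1]$ into $I_k := [(k-1)/K,k/K]$ and define
\begin{equation*}
    G(z) := g_{\theta_k}\!\bigl(Kz-(k-1)\bigr)\quad\text{for } z\in I_k .
\end{equation*}
If $U\sim \mathcal{U}([0,1])$ then $\lceil KU\rceil$ is uniform on $\{1,\dots,K\}$ and $KU-(\lceil KU\rceil -1)$ is uniform on $[0,1]$ and independent of the index, so the pushforward $G_*U$ equals $\tfrac1K\sum_{k=1}^K q_{\theta_k}$ exactly. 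Approximating $G$ in $L^\infty$ on a set of measure $\geq 1-1/R$ by a single ReLU net $g_\theta$ then gives the TV bound via the standard coupling inequality $d_{TV}(G_*U,g_{\theta *}U)\leq \mathbb{P}(G(U)\neq g_\theta(U))$.

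The central idea for building $g_\theta$ uses the hypothesis that each $g_{\theta_k}$ is constant outside $[0,1]$. Feed the affine input $\hat z_k(z):=Kz-(k-1)$ to $g_{\theta_k}$; then the output equals $g_{\theta_k}(Kz-(k-1))$ on $I_k$, equals the constant $c_k^L:=g_{\theta_k}(0)$ on $z<(k-1)/K$, and equals $c_k^R:=g_{\theta_k}(1)$ on $z>k/K$. Hence
\begin{equation*}
    \textstyle \sum_{k=1}^K g_{\theta_k}(\hat z_k(z)) \;=\; G(z) \;+\; \sum_{k<j}c_k^R + \sum_{k>j}c_k^L\quad\text{for } z\in I_j.
\end{equation*}
The trailing term is a piecewise-constant vector with jumps only at the breakpoints $k/K$. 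Each jump can be compensated by a steep ReLU ramp $\sigma_k(z):=\mathrm{ReLU}(KR(z-k/K))-\mathrm{ReLU}(KR(z-k/K)-1)$, which equals the indicator of $\{z\geq k/K\}$ outside a transition interval of width $1/(KR)$ and uses only $2$ non-linearities. Subtracting $\sigma_k(z)\,(c_k^R-c_k^L)$ for $k=1,\dots,K-1$ (with a fixed affine correction absorbed into the output bias) exactly cancels the $z$-dependent offset outside the union of transition intervals, which has total Lebesgue measure $\leq (K-1)/(KR)<1/R$.

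Assembling the architecture in parallel gives a single ReLU net $g_\theta$: the $K$ sub-nets run in block-diagonal fashion on inputs $\hat z_k(z)$, contribute at most $Kp$ non-linearities, and the affine $\hat z_k$ and $c_k^L$ corrections are absorbed into the first and last layers of each sub-net. The $K-1$ ramps $\sigma_k$ contribute $\leq 2K$ extra non-linearities, and a handful of additional ReLUs per $k$ (at most $6$, absorbing e.g.\ input clipping and the final weighted sum) give a total of at most $K(p+6)$ non-linearities. The weight magnitudes are $\leq R$ inside each sub-net and $\leq KR$ in the slope of each $\sigma_k$, so $\theta\in[-KR,KR]^{K(p+6)}$. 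Composing with $U\sim\mathcal{U}([0,1])$ and applying the coupling inequality yields $d_{TV}(\tfrac1K\sum_k q_{\theta_k},q_\theta)\leq 1/R$, as required.

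\paragraph{Main obstacle.} The hardest part is not the analytic estimate but the design issue that a ReLU net cannot perform exact conditional computation nor multiplication: a naive formula such as $\sum_k \mathbf{1}_{I_k}(z)\cdot g_{\theta_k}(\cdot)$ is not expressible. The trick that makes the argument work is to exploit the ``constant outside $[0,1]$'' hypothesis so that feeding $Kz-(k-1)$ into $g_{\theta_k}$ automatically \emph{self-gates} the sub-net, leaving only a piecewise-constant additive offset that is cheap (O($K$) ReLUs, each of slope $KR$) to cancel. Everything else (parameter counting, TV bound, weight bounds) is routine once this construction is in place.
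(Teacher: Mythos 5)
Your construction is correct and follows essentially the same route as the paper's proof: rescale the latent variable so each sub-net occupies one block of $[0,1]$, exploit the constancy of $g_{\theta_k}$ outside $[0,1]$ to self-gate the sub-nets, and remove the remaining piecewise-constant residual with steep ReLU ramps (slope of order $KR$) whose transition intervals, of total measure at most $1/R$, account for the total-variation error. The only slip is an index in the offset cancellation—the jump at the breakpoint $k/K$ is $c_k^R - c_{k+1}^L$ rather than $c_k^R - c_k^L$—which affects neither the parameter count nor the bounds.
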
 

Fig.~\ref{fig:preMappingAvg} (in \S\ref{app:proof}) illustrates how $g_{\theta}$ is constructed.
Unlike the universal approximation theorem, Prop.~\ref{prop:conv_distrib} shows that \emph{a single neural network} can \emph{represent} mixtures. On the one hand, when one wants to approximate an arbitrary continuous function, the number of required hidden units may be prohibitively large~\citep{lu2017expressive} as the error $\epsilon$ vanishes. On the other hand, the dimension of $\theta$ in Prop.~\ref{prop:conv_distrib} does not depend on any vanishing quantity. 
The high-level insight is that a large enough ReLU net can represent mixtures of distributions induced by smaller ReLU nets, with a width that grows linearly with the size of the mixture.

\paragraph{Neural Nets can represent an average of Sub-Nets.}
If we consider averages of functions as described in \eqref{eq:convFunc}, we can show that point-wise averages of neural networks can be represented by a wider neural network. This result is valid for any non-linearity.
\begin{restatable}{proposition}{convexNetsFunction}
\label{prop:conv_function} Let $w_k \in [-R,R]^p\,,\, k = 1\ldots K$ be the parameters of $k$ neural nets with $p$ parameters, there exists au neural net parametrized by $w\in [-R,R]^{Kp}$, such that $\tfrac{1}{n} \sum_{k=1}^K f_{w_k} = f_{w}$.
\end{restatable}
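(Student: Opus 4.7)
The plan is to construct $f_w$ explicitly by running all $K$ sub-networks in parallel inside a single wider network and averaging their outputs at the final linear layer. Assuming without loss of generality that the $f_{w_k}$ share a common architecture (otherwise pad narrower layers with zero rows and zero biases, which leaves the computed function unchanged and keeps weights in $[-R,R]$), write each sub-network as in \eqref{eq:ReLU} with weight matrices $W_{i,k}$ and biases $b_{i,k}$ for $i=1,\ldots,l$ and $k=1,\ldots,K$. I would then define the combined network via the weight matrices $\tilde W_1 = [W_{1,1}^\top,\ldots,W_{1,K}^\top]^\top$ (vertical stacking) and $\tilde b_1 = [b_{1,1}^\top,\ldots,b_{1,K}^\top]^\top$ for the first layer; block-diagonal $\tilde W_i = \diag(W_{i,1},\ldots,W_{i,K})$ with stacked biases $\tilde b_i$ for hidden layers $2,\ldots,l-1$; and the horizontally stacked, scaled final layer $\tilde W_l = \tfrac{1}{K}[W_{l,1},\ldots,W_{l,K}]$ with bias $\tilde b_l = \tfrac{1}{K}\sum_{k=1}^K b_{l,k}$.

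The next step is an induction on the layer index $i$: I would show that the depth-$i$ preactivation of the combined network on input $x$ equals the concatenation of the depth-$i$ preactivations of $f_{w_1}(x),\ldots,f_{w_K}(x)$. The base case $i=1$ is immediate from the definitions of $\tilde W_1,\tilde b_1$, and the inductive step follows from the block-diagonal structure together with the fact that $\sigma$ acts coordinate-wise, so the non-linearities in different blocks do not interact. Applying $\tilde W_l,\tilde b_l$ to this concatenated vector then yields exactly $\tfrac{1}{K}\sum_{k=1}^K(b_{l,k}+W_{l,k}h_{l-1,k}) = \tfrac{1}{K}\sum_{k=1}^K f_{w_k}(x)$ by linearity of the output layer.

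Finally I would verify the parameter-count and the range constraint. Since each block contributes the same number of entries as one sub-network, the total number of non-zero parameters is exactly $Kp$, so $w\in\R^{Kp}$. For the range, entries of $\tilde W_i,\tilde b_i$ for $i<l$ inherit the bound $|\cdot|\le R$ directly from the $w_k$; the last-layer weights lie in $[-R/K,R/K]\subseteq[-R,R]$ because multiplying numbers in $[-R,R]$ by $1/K\in[0,1]$ shrinks the range, and the last-layer bias satisfies $|\tilde b_l|\le \tfrac{1}{K}\sum_k|b_{l,k}|\le R$ by the triangle inequality.

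The main obstacle is not really a conceptual one but a bookkeeping one: one must fix a convention that makes "same architecture" meaningful (hence the zero-padding reduction), and be careful that zero-padding does not inflate the parameter range or change the computed function. Once that is set up, the identity $f_w = \tfrac{1}{K}\sum_k f_{w_k}$ and the bound $w\in[-R,R]^{Kp}$ follow from the block construction without any further analysis, and the argument is independent of the choice of non-linearity $\sigma$, in contrast to Proposition~\ref{prop:conv_distrib} where the ReLU-specific construction is essential.
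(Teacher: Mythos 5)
Your construction is correct and is essentially the paper's own argument: run the $K$ sub-networks in parallel (block-diagonal hidden layers) and absorb the averaging coefficients into the final linear layer, which keeps all weights in $[-R,R]$ since $1/K\le 1$. You are in fact slightly more careful than the paper—which only works out the two-layer case explicitly and asserts the deep case by ``concatenating each layer''—in that you spell out the layerwise induction and note that the $Kp$ count refers to the non-zero (free) parameters of the block construction.
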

Figure~\ref{fig:convex_function} shows how $f_w$ is constructed. 
Similarly as the Prop.~\ref{prop:conv_distrib}, Prop.~\ref{prop:conv_function} is a representation theorem that shows that the space or neural networks with a fixed number of parameters is `almost' convex.


\subsection{A Minimax Theorem for NC-NC Games Played with Neural Nets}
\label{sub:neural_net_minimax}

Prop.~\ref{prop:conv_distrib} and~\ref{prop:conv_function} give insights about the representative power of neural nets: as their number of parameters grows, neural nets can express larger mixtures/averages of sub-nets.  
Combining these properties with Thm.~\ref{thm:finiteKepsilon}, we show that approximate equilibria can be achieved for such nonconcave-nonconvex (NC-NC) payoff.

\begin{reptheorem}{thm:non-conv_minimax}
Let $\varphi$ be a NC-NC game that follows Assumption~\ref{assump:nonconvex-nonconcave_convex}.
We assume that the models' payoff $\tilde \varphi$ is bilinear and consider two cases where the models have $p$ parameters bounded by $R$, i.e., $(w,\theta) \in [-R,R]^{2p}$:
\begin{enumerate}
    \item \emph{[Function vs. \!distribution]}: $\varphi(w,\theta) = \tilde \varphi(f_w, q_\theta)$ where $q_\theta$ is a distribution parametrized by a ReLU net with $d_{in}=1$ (see Eq.~\ref{eq:induced_distribution}), and $f_w$ is a neural network with any non-linearity. Applies to Ex.~\ref{example:WGAN}.
    \item \emph{[Distribution vs. \!distribution]}: $\varphi(w,\theta) = \tilde \varphi(q_w, q_\theta)$ where $q_\theta$ and $q_w$ are distributions parametrized by a ReLU net with $d_{in}=1$. Applies to Example~\ref{example:Blotto}.
\end{enumerate}
If $\tilde \varphi$ is $\tilde L$-Lipschitz, and if $\varphi$ is $L$-Lipschitz, then for any $\epsilon >0$, there exists $(w_\epsilon^*,\theta_\epsilon^*) \in [-R,R]^{2p}$, such that,
\begin{equation} \label{eq:bound_p_epsilon}
    \min_{\theta \in [-R,R]^{p_\epsilon} } \!\varphi(w_{\epsilon}^*,\theta) + \epsilon + \frac{2\tilde L}{R} \geq \!\max_{w \in [-R,R]^{p_\epsilon}} \! \varphi(w,\theta_\epsilon^*) \,, \notag
\end{equation}
where $p_\epsilon \geq C \epsilon\sqrt{\frac{p}{\log(R\sqrt{p}/\epsilon)}}$, $R_\epsilon \geq R\frac{p_\epsilon}{p}$, and the subnetworks generating the distributions (see Eq.~\ref{eq:induced_distribution}) takes their values in $[0,1]^d$ and are constant outside of $[0,1]$.
\end{reptheorem}
An explicit formula for $C$ is provided in \S\ref{app:minimax} as well as variants of this theorem when both players pick a model that is a function.
Theorem~\ref{thm:non-conv_minimax} shows the existence of a notion of weaker-capacity-\emph{equilibrium} for a nonconcave-nonconvex game where the models use a \emph{standard fully connected architecture}. This result differs from~\citet[Theorem 4.3]{arora2017generalization} who, only in the context of GANs, design a \emph{specific architecture} to achieve a different notion of approximate equilibrium.

The notion of weaker-capacity is encompassed within the fact that $w_\epsilon$ and $\theta_\epsilon$ are of dimension $p_\epsilon \leq p$ and are bounded by $R_\epsilon \leq R$. Roughly, this theorem can be interpreted as follows: if one considers a minimax game where the players pick neural networks and where the payoff is bilinear with respect to the models (functions on distributions) represented by these neural networks, then there exist a combination of parameters that can achieve an $\epsilon$-approximate minimax against subnetworks with $p_\epsilon$ parameters, i.e., the first (resp. second) player cannot be beaten by more than $\epsilon$ by any sub-network of the second (resp. first) player.

Regarding the number of parameters $p_\epsilon$ of these subnetworks, on the one hand, if $\epsilon\sqrt{p}<1$, then the lower-bound on $p_\epsilon$ is vacuous, on the other, the number of parameters of the higher-capacity networks $p$ only needs to (roughly) grow \emph{quadratically} with $\epsilon$ to achieve a non-vacuous bound. Hence, a consequence of Theorem~\ref{thm:non-conv_minimax} is that, for the NC-NC games that follow the assumptions of Theorem~\ref{thm:non-conv_minimax}, \emph{highly over parametrized networks} can provably achieve a non-vacuous notion of approximate equilibrium. 

It is worth discussing what happens when some assumptions fail to hold. We think the Lipschitz assumptions are quite standard. We focus on the case of distributions parametrized by a ReLU net with $d_{in} =1$. We think the assumptions that the non-linearities must be ReLU is an artefact form the proof technique and we believe that with a bit more work a similar version of Theorem~\ref{thm:non-conv_minimax} would hold with any non-linearities with which one can arbitrarily approximate indicator functions. Extending theorem to $d_{in}>1$ seems possible though very challenging. The difficulty relies on extending Proposition~\ref{prop:conv_distrib} to $d_{in}\geq 2$. 
One idea, may be to consider deeper networks to represent a mixture of smaller nets. We leave this question open.

\section{APPLICATION: SOLVING COLONEL BLOTTO GAME}
\label{sec:blotto}
\begin{figure*}[h]
    \centering
    \begin{subfigure}{.6\linewidth}
    \centering
    \hspace{-1cm}
    \includegraphics[width=.95\linewidth]{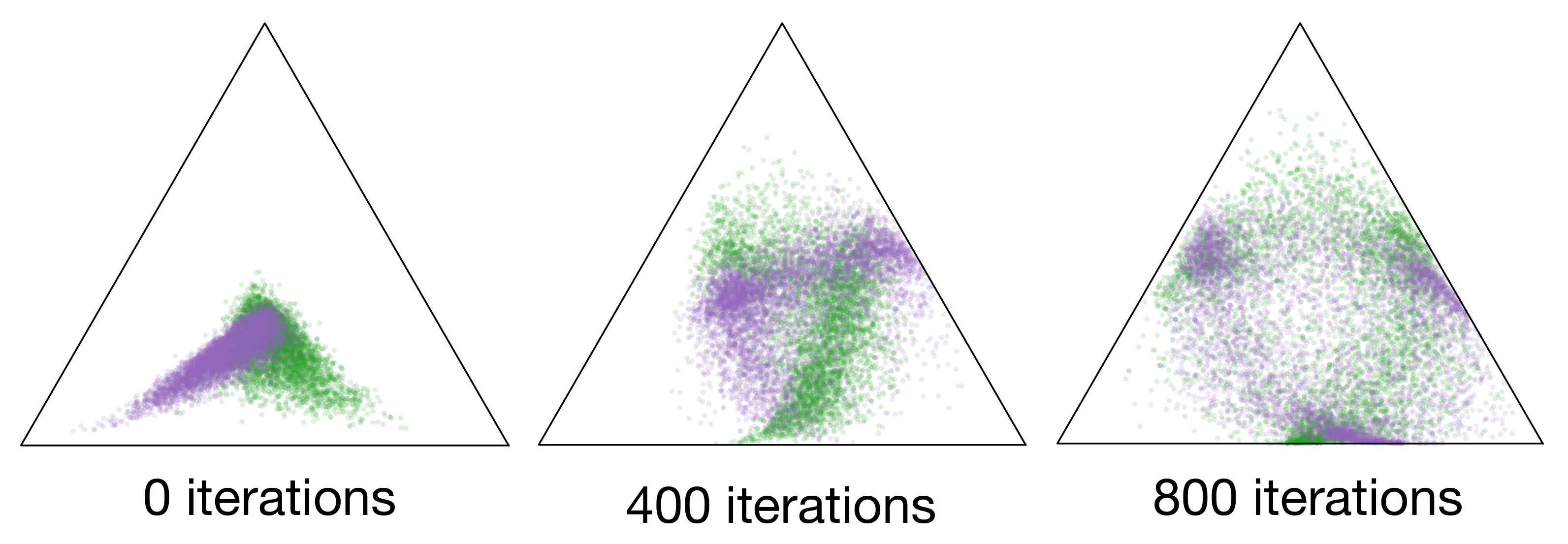}
    \caption{ 
    $5000$ samples using the latent mapping $f$ and $g$ after $0,400$, and $800$ training steps. Their respective suboptimality along training has a value of $1.5,1.2,$ and $.5$.
    }
    \label{fig:sample}
    \end{subfigure}
    \quad 
    \begin{subfigure}{.35\linewidth}
    \centering
    \includegraphics[width=.95\linewidth]{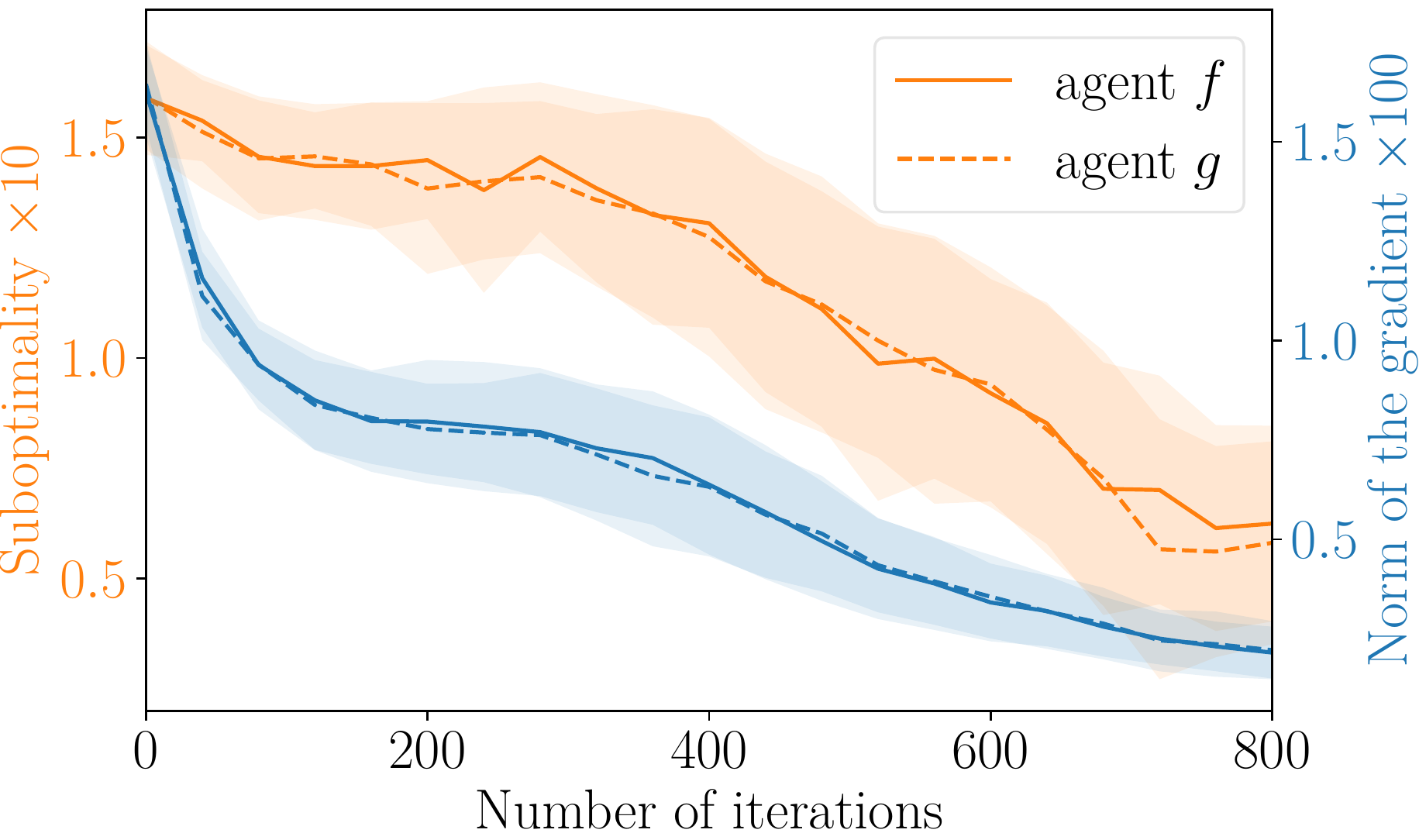}
    \caption{ Performance and convergence of the agents. 
    }
    \label{fig:performance}
    \end{subfigure}
    \caption{
    Training of latent agents to play differentiable Blotto with $K=3$. 
    \textbf{Right:} The suboptimality corresponds to the payoff of the agent against a best response. The curves correspond to averages over 40 random seeds with standard deviation.}
\end{figure*}

Finally, we present some experiments with $d_{in}\geq2$ on the Colonel Blotto Game to illustrate the soundness of Theorem~\ref{thm:non-conv_minimax} and explore whether equilibria can be achieved with larger input spaces. We apply the latent game approach developed in Definition~\ref{def:latent_eqm} to solve a differentiable version of Example~\ref{example:Blotto}. 
We consider a continuous relaxation of the strategy space where $S_1 = S_2$. After renormalization we have that $A_1 = A_2 = \Delta_K$, where $\Delta_K$ is the $K$-dimensional simplex. It is important to notice that in that case an \emph{allocation} corresponds to a \emph{point on the simplex} and a \emph{mixture of allocation} corresponds to a \emph{distribution over the simplex}.
We replace the payoff~\eqref{eq:payoff_latent_blotto} of Latent Blotto by a differentiable one,
\begin{equation}\label{eq:diff_blotto}
      \varphi(w,\theta) :=  \E_{\substack{z \sim \pi \\z' \sim \pi'}}\big[\frac{1}{K}\sum_{k=1}^K \sigma ([f_w(z) - g_\theta(z')]_k)\big]
\end{equation}
where $\sigma$ is a sigmoid minus $1/2$ and $f_w,g_\theta:\R^p \to \Delta_K$. This game has been theoretically analyzed by~\citet{ferdowsi2018generalized} when $S_1 > S_2$.

The generative functions $f_w$ and $g_\theta$ are dense ReLU nets with $4$ hidden layers, $16$ hidden units per layer, and a $K$-dimensional softmax output with a Gaussian latent variable ($d_{in}=16$). We trained the models using gradient descent ascent on the parameters of $f$ and $g$ with the Adam optimizer~\citep{kingma2014adam} with $\beta_1=.5$ and $\beta_2 = .99$.

In Fig.~\ref{fig:performance}, we present the performance of the agents against a best response. To compute it, we sampled $5000$ strategies and computed the best response against this mixed strategy using gradient ascent on the simplex. We also computed the norm of the (stochastic) gradient used to update $f$. In Fig.~\ref{fig:sample}, we plotted samples from $f$ at different training times. As we get closer to convergence to a non-exploitable strategy, we can see that this distribution avoids the center of the simplex (putting troops evenly on the battlefields) and the corners (focusing on a single battlefield) that are strategies easily exploitable by focusing on two battlefields, this correlates with the decrease of the gradient and of the suboptimality indicating that the agents learned how to play Blotto.

\section{DISCUSSION}
\vspace{-2mm}
Nonconcave-nonconvex games radically differ from minimization problems since equilibria may not exist in general. 
How, then, can neural nets regularly find meaningful solutions to games like GANs? 

In this work, we partially answer this question by leveraging the structure of GANs to show that a single pair of ReLU nets can achieve a notion of limited-capacity-equilibrium. 
The intuition underlying our theorem is as follows: neural nets have a particular structure that interleaves matrix multiplications and simple non-linearities. The matrix multiplications in one layer of a neural net compute linear combinations of functions encoded by the other layers. In other words, neural nets are (non-)linear mixtures of their sub-networks. 

Our main result can be related to games with bounded rationality~\citep{simon1969sciences}: when the players pick parametrized models, they are limited by the representational power of the class of models accessible (e.g., a fixed architecture).
It is instructive to discuss the relative merits of that limited-capacity aspect occuring. 
On the one hand, if one had access to any function/distribution an infinite-capacity equilibrium would exist (because the whole function/distribution space is convex). However, this quantity may not be realistic, e.g., in GANs, the optimal infinite-capacity generator must represent the distribution of `real-world' images. If such a concept is not tractable, it seems unrealistic to expect limited capacity agents, such as humans or computers, to find it~\citep{papadimitriou2007complexity}. 
On the other hand, our work shows that one can efficiently approximate some equilibria when working with neural networks. In a similar vein as the games with bounded rationality, these equilibria capture the notion that agents--and humans--that play complex games have a limited capacity. It seems to be a more reasonable concept to consider the optimal way to play complex games such as Poker of StarCraft II that are multi-step with imperfect information. 

\acknowledgments{Gauthier Gidel would like to thank Ian Gemp for the helpful discussions.}

\bibliographystyle{abbrvnat}
\bibliography{references.bib}
\appendix

\onecolumn
\aistatstitle{A Minimax Theorem for Nonconcave-Nonconvex Games: \\
Supplementary Materials}

\section{Interpretation of Equilibria in Latent Games}
\label{app:interpretation_equilibria}

In latent games, players embed in mapping spaces in order to solve the game. 
When we consider a standard normal form game $\varphi$ that we try to solve using mappings to approximate mixtures of strategies, we are actually playing a limited capacity version of the game that heavily depends on the expressivity of the mappings in the classes $\mathcal{F}$ and $\mathcal{G}$. 

Such a limitation may be interpreted as limitations on the skills of the players. It intuitively makes sense that such limitations would change the optimal way to play the game: the optimal way to play StarCraft II is different for players that can perform 10 versus 100 actions per second. Thus,
if the goal is to train agents to compete with humans, one needs to set a class $\mathcal{G}$ that (roughly) corresponds to human skills. Setting ``fair'' constraints on the RL agents trained to play the game of StarCraft II has been an important issue~\citet{vinyals2019grandmaster} and can be understood as setting the right class $\mathcal{G}$ in a latent game.

Similarly a player would not play poker the same way if they had no memory of their opponents' behavior in previous games. 

Similarly, in the context of Generative Adversarial Networks, it has been argued that setting a restricted function class for the discriminator provides a more meaningful loss and describes an achievable learning task for the generator~\citet{arora2017generalization, huang2017parametric}. The final task is to generate pictures that are realistic according to the human metric. Such task is way looser -- and thus easier to achieve -- than for instance minimizing the KL divergence or the Wasserstein distance between the real data distribution and the generated distribution. 

To sum-up, the equilibrium of a latent game provides a notion of limited-capacity-equilibrium that can define a target that correspond to agents with expressive and realistic behavior. In many tasks, our goal is to train agents that outperform human using human realistic limitations: 
it is important to constrain the agent in order to prevent it to play $10^5$ actions per minute but it is also important to constrain its opponent because we would like opponent to try to exploit the main agent in a semantically meaningful way and not by designing very specific 'adversarial example' strategies --e.g., very precise positions of units that breaks the vision system of the main agent --  that a human player could not perform.

This idea of modeling the limitations of realistic players play suboptimally is related to the notion of games with bounded rationality~\citep{simon1969sciences,rubinstein1998modeling,papadimitriou1994complexity,kalai1990bounded} or bounded computation~\citep{halpern2015algorithmic}. However, bounded rationality models players that do not optimize their reward function~\citep{rubinstein1998modeling}, the corresponding literature aims to model a process a choice for players not always maximizing their reward. Bounded computation refers to studies of games where players pay for the (time) complexity of the strategy they use. The notion of limited-capacity in latent games is a limitation on the representative power of the function (or distribution) spaces. The literature has not thoroughly considered \emph{limitations on representational power} -- a gap that is critical to address, given that neural nets are now a major workhorse in AI and ML.

\section{Relevance of the Minimax theorem in the Context of Machine Learning}{
\label{app:importance of minimax}
}

A notorious ML application which has a minimax formulation is \emph{adversarial training} where a classifier is trained to be robust against adversarial attack.
From a game-theoretic perspective, the adversarial attack is picked after the classifier $f$ is set and thus it corresponds to a best response. 
From a learning perspective, the goal is to learn to be robust to adversarial attacks \emph{specifically designed} against the current classifier. Such an equilibrium is called a Stackelberg Equilibrium~\citep{conitzer2006computing}.


In games with imperfect information such as Colonel Blotto, Poker, or StarCraft II the players must commit to a strategy without the knowledge of the strategy picked by their opponent. In that case, the agents cannot design attacks specific to their opponent, because such attacks may be exploitable strategies. It is thus strictly equivalent to consider that the players simultaneously pick their respective strategies and then reveal them. Thus, a meaningful notion of playing the game must have a value and an equilibrium. 

In machine learning applications, each player is trained using local information (though gradient or RL based methods). Because the behavior of the players changes slowly, they cannot have access to the best response against their opponent. In order to illustrate that point, let us consider the example of Generative Adversarial Networks. The two agents (the generator and the discriminator) are usually sequentially updated using a gradient method with similar step-sizes. During training, one cannot expect an agent to find a best response in a single (or few) gradient steps. 
To sum-up, since local updates are performed one must expect to reach a point (if it exists) that is locally stable. In this work, we show that there actually exists a \emph{global} approximate equilibrium for a large class of parametrized games.

\section{Proof of results from Section~\ref{sec:minimax_latent}}
\label{app:proof}

\subsection{Proof of Proposition~\ref{prop:minimax_hull}}

Before proving this proposition let us state Sion's minimax theorem.

\begin{theorem}[Minimax theorem~\citep{sion1958general}]\label{thm:sion's theorem}
If $U$ and $V$ are convex and compact sets and if the sublevel sets of $\varphi(\cdot,v)$ and $-\varphi(u,\cdot)$ are convex then,
\begin{equation}
    \max_{u \in U} \min_{v \in V} \varphi(u,v) = \min_{u  \in U} \max_{v \in V} \varphi(u,v)
\end{equation}
\end{theorem}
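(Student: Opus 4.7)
\emph{Plan.} The inequality $\sup_{u \in U} \inf_{v \in V} \varphi(u,v) \le \inf_{v \in V} \sup_{u \in U} \varphi(u,v)$ is the trivial (weak-duality) half and holds without any convexity: for any fixed $(u_0,v_0)$ one has $\inf_v \varphi(u_0,v) \le \varphi(u_0,v_0) \le \sup_u \varphi(u,v_0)$, and taking $\sup_{u_0}$ on the left together with $\inf_{v_0}$ on the right yields the bound. So the real content is the reverse inequality, which is what the quasi-convex--quasi-concave hypothesis (sublevel sets of $\varphi(\cdot,v)$ and of $-\varphi(u,\cdot)$ convex) is tailored to deliver.

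The plan is to argue by contradiction. Suppose there exists $\alpha \in \mathbb{R}$ with $\sup_U \inf_V \varphi < \alpha < \inf_V \sup_U \varphi$. From the right-hand inequality, for every $v \in V$ the superlevel set $\{u : \varphi(u,v) > \alpha\}$ is nonempty; equivalently, as $v$ varies, the open (by the implicit upper semi-continuity in $u$) sets $B_v := \{u : \varphi(u,v) > \alpha\}$ cover $U$. By compactness of $U$, extract a finite subcover indexed by $v_1,\dots,v_n$. Symmetrically, from $\sup_U \inf_V \varphi < \alpha$ one extracts $u_1,\dots,u_m$ such that the convex open sets $\{v : \varphi(u_j,v) < \alpha\}$ cover $V$.

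The remainder is a finite-dimensional argument on the compact convex hulls $U' := \operatorname{conv}\{u_1,\dots,u_m\} \subset U$ and $V' := \operatorname{conv}\{v_1,\dots,v_n\} \subset V$, on which $\varphi$ retains its quasi-convex/quasi-concave structure. I would then invoke the Knaster--Kuratowski--Mazurkiewicz (KKM) lemma applied to the closed convex sublevel sets $F_k := \{u \in U' : \varphi(u,v_k) \le \alpha\}$. The KKM covering hypothesis (that every face $\operatorname{conv}\{v_k : k \in I\}$ is covered by $\bigcup_{k \in I} F_k$) is verified by quasi-concavity of $\varphi(u,\cdot)$: if a convex combination $\sum_{k\in I}\mu_k v_k$ satisfied $\varphi(u,\cdot) > \alpha$ at every $u \in \bigcap_{k\in I} F_k^c$, the superlevel structure would contradict the original cover by the $B_{v_k}$'s. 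KKM then produces a point $u^\star \in \bigcap_k F_k$, i.e., $\varphi(u^\star,v_k) \le \alpha$ for every $k$. Combining this with the symmetric cover of $V$ by $\{v : \varphi(u_j,v) < \alpha\}$ yields $\inf_V \varphi(u^\star,\cdot) \le \alpha$ uniformly, and hence $\sup_U \inf_V \varphi \ge \alpha$ would fail in the wrong direction, delivering the contradiction.

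The main obstacle is precisely the passage from concave-convex to quasi-convex--quasi-concave: in the bilinear (Von Neumann) or convex-concave setting one could separate a convex sublevel set from a point by an affine functional via Hahn--Banach, but under the present hypothesis there is no linear structure to exploit. The essential nontrivial input is therefore a topological fixed-point-type result (KKM, or equivalently Brouwer). A secondary technicality is the transfer between the infinite-dimensional $U,V$ and the finite simplices $U',V'$, which the finite-subcover reduction handles transparently; one also tacitly uses upper/lower semi-continuity of $\varphi$ in its two arguments to ensure the covering sets are open and the sublevel sets $F_k$ are closed.
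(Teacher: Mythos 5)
The paper does not prove this statement: it is quoted verbatim (as Theorem~\ref{thm:sion's theorem}) from \citet{sion1958general} and used as a black box in the proof of Proposition~\ref{prop:minimax_hull}, so there is no in-paper argument to compare against. Your sketch follows the classical route to Sion's theorem (weak duality for free, contradiction with a level $\alpha$, compactness to reduce to finitely many strategies, then a KKM/Brouwer-type input to handle quasi-convexity where Hahn--Banach separation is unavailable), which is the right family of ideas.

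However, there is a genuine gap at the covering step, and it propagates. You assert that because each superlevel set $B_v = \{u : \varphi(u,v) > \alpha\}$ is nonempty, the family $\{B_v\}_{v\in V}$ covers $U$; these are not equivalent, and the covering does not follow from $\alpha < \inf_V \sup_U \varphi$. What the two strict inequalities actually give is crossed relative to your claim: from $\sup_U \inf_V \varphi < \alpha$ one gets that for every $u$ there is some $v$ with $\varphi(u,v) < \alpha$, i.e.\ the sets $\{u : \varphi(u,v) < \alpha\}$ (sublevel, convex by quasi-convexity in $u$) cover $U$; from $\inf_V \sup_U \varphi > \alpha$ one gets that the sets $\{v : \varphi(u,v) > \alpha\}$ cover $V$. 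With the covers as you state them, the finite subcover extraction has no basis, and the intended contradiction (the $B_{v_k}$ cover $U'$, hence $\bigcap_k F_k = \emptyset$, contradicting KKM) never gets started. Separately, your verification of the KKM face-covering hypothesis is not an argument as written --- the quantifiers over ``every $u \in \bigcap_{k\in I} F_k^c$'' do not yield the required inclusion $\mathrm{conv}\{v_k : k\in I\} \subset \bigcup_{k\in I} F_k$ (which is not even type-correct, as the $F_k$ live in $U'$ and the faces in $V'$); the real work in Sion's proof is the two-point lemma $\sup_u \min\{\varphi(u,v_1),\varphi(u,v_2)\} \geq \min_i \sup_u \varphi(u,v_i)$ proved via a connectedness or KKM argument, and that content is missing here. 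You are right, though, to flag that semicontinuity of $\varphi$ in each variable is tacitly needed --- the paper's statement omits it, and the theorem is false without it.
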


Let us now state our proposition.

\minimaxHull*

\begin{proof}
For simplicity and conciseness we note, $\gF = \gF_\Omega$ and $\gG= \gG_\Theta$.
     The sets $\hull(\gF)$ and $\hull(\gG)$ are convex by construction. However, they are not compact in general. However, since $\gG$ is assumed to be a compact set we then have that under mild assumptions (namely, that $\gF$ and $\gG$ belong to a completely metrizable locally convex space) that the closure of $\hull(\gG)$ is compact~\citep[Theorem 5.20]{aliprantisinfinite}. Thus, we can apply Sion's theorem to get, 
     \begin{equation}
    \min_{p \in \closure(\hull(\gG))} \max_{f \in \closure(\hull(\gF))} \tilde \varphi(f,p) = \max_{f \in \closure(\hull(\gF))} \min_{p \in \closure(\hull(\gG))} \tilde \varphi(f,p)
\end{equation}
\end{proof}
Moreover there exists $(w_i)_{i\geq 0}$, $(\theta_i)_{i\geq 0}$, $\lambda_i \geq 0\,,\, \sum_{i\geq 0} \lambda_i =1$ and $\rho_i \geq 0\,,\, \sum_{i\geq 0} \rho_i =1$ such that,
\begin{equation}
    V(\Omega,\Theta) = \tilde \varphi \Big( \sum_{i\geq 0} \lambda_i f_{w_i},  \sum_{i\geq 0} \rho_i p_{\theta_i} \Big)\,.
\end{equation}
This comes from the fact that any element in $\closure(\hull(\gF))$ can be written as $\sum_{i\geq 0} \lambda_i f_{w_i}$:
 \begin{lemma}\label{lemma:compact_convex_closure}Let $U$ be a compact set that belongs to a completely metrizable locally convex space. Then the closure of the convex hull of $U$ is compact and we have that 
 $\closure(\hull(U)) = \{\sum_{i\geq 0} \lambda_i u_i \,,\, \lambda_i \geq 0\,, \, \sum_{i\geq 0} \lambda_i =1 \,,\, u_i \in U\}$.
 \end{lemma}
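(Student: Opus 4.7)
The lemma combines a compactness assertion with an explicit series representation, and I would handle these in sequence.

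For compactness, the plan is to invoke (and sketch) Mazur's theorem: the closed convex hull of a compact set in a Fr\'echet space is compact. The standard argument shows $\hull(U)$ is totally bounded with respect to every continuous seminorm $p$ of the space: cover $U$ by finitely many $p$-balls of radius $\epsilon$ around points $u_1,\ldots,u_N\in U$; the finite-dimensional polytope $\hull\{u_1,\ldots,u_N\}$ is itself $p$-totally bounded, and any convex combination of $U$-points lies within $p$-distance $\epsilon$ of a convex combination of the $u_i$'s. Total boundedness in every defining seminorm, combined with completeness of the ambient space, upgrades to compactness of $\closure(\hull(U))$.

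For the representation, the inclusion $\{\sum_{i\geq 0}\lambda_i u_i\}\subseteq\closure(\hull(U))$ is handled directly: each such series converges absolutely in every seminorm (using boundedness of $U$ inherited from compactness), and each partial sum $\sum_{i\leq N}\lambda_i u_i$ is promoted to an element of $\hull(U)$ by appending the correction term $(1-\sum_{i\leq N}\lambda_i)u_\star$ for any fixed $u_\star\in U$, a correction that vanishes in the limit. The reverse inclusion is the crux and I would prove it by an iterative peeling construction. Given $x\in\closure(\hull(U))$, inductively maintain a partial convex combination $s_n=\sum_{i\leq n}\lambda_i u_i$ with cumulative weight $\Lambda_n<1$, together with a normalized residual $z_n=(x-s_n)/(1-\Lambda_n)$. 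At each stage, approximate $z_n$ by $y_{n+1}\in\hull(U)$ to within an error $\epsilon_{n+1}$ drawn from a geometric schedule, write $y_{n+1}=\sum_j a_j v_j$, and peel off a fraction $\beta_{n+1}\in(0,1)$ of $y_{n+1}$ as new summands $(1-\Lambda_n)\beta_{n+1}a_j v_j$ appended to $s_n$. Then $1-\Lambda_{n+1}=(1-\Lambda_n)(1-\beta_{n+1})$. Choosing $\beta_n$ so that $\Lambda_n\to 1$ while $\|z_n\|$ stays bounded gives $\|x-s_n\|=(1-\Lambda_n)\|z_n\|\to 0$, hence $s_n\to x$ and $x=\sum_{i\geq 0}\lambda_i u_i$ with $u_i\in U$ and $\sum_i\lambda_i=1$.

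The main obstacle is calibrating the peeling fractions $\beta_n$ against the approximation errors $\epsilon_n$: overly greedy peeling amplifies the residual $z_n$ (via the recursion $z_{n+1}=(z_n-\beta_{n+1}y_{n+1})/(1-\beta_{n+1})$, whose amplification factor is $1/(1-\beta_{n+1})$), while overly timid peeling prevents $\Lambda_n\to 1$. I would resolve this by picking $\beta_n$ bounded below by a fixed constant and $\epsilon_n$ decaying sufficiently quickly relative to the amplification factor, and by a standard diagonal argument that absorbs the fact that $z_n$ lies only approximately (rather than exactly) in $\closure(\hull(U))$ into the next step's error budget, ensuring the accumulated discrepancies remain summable and the series ultimately represents $x$ exactly.
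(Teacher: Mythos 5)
Your first two steps are sound: the Mazur-type total-boundedness argument for compactness of $\closure(\hull(U))$ is correct (the paper gets the same fact by citing Theorem 5.20 of Aliprantis--Border), and the inclusion of the set of countable convex series into $\closure(\hull(U))$ via corrected partial sums is fine.

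The genuine gap is the reverse inclusion, and the calibration you hope for cannot exist. Track $a_n:=(1-\Lambda_n)\,\mathrm{dist}\bigl(z_n,\closure(\hull(U))\bigr)$. Because $\hull(U)$ is only \emph{dense} in $\closure(\hull(U))$, the best guarantee at each stage is $\|z_n-y_{n+1}\|\le \mathrm{dist}\bigl(z_n,\closure(\hull(U))\bigr)+\epsilon_{n+1}$ with $\epsilon_{n+1}>0$, and the recursion $z_{n+1}=z_n+\tfrac{\beta_{n+1}}{1-\beta_{n+1}}(z_n-y_{n+1})$ then only yields $a_{n+1}\le a_n+(1-\Lambda_n)\beta_{n+1}\epsilon_{n+1}$. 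Hence $\|x-s_n\|=(1-\Lambda_n)\|z_n\|$ is bounded by $(1-\Lambda_n)\sup_{c\in \closure(\hull(U))}\|c\|$ plus a nondecreasing error that converges to $\sum_k(1-\Lambda_{k-1})\beta_k\epsilon_k>0$: the series you construct converges to a point within a prescribed $\eta>0$ of $x$, not to $x$ itself, so the ``diagonal absorption'' only re-proves that (even finite) convex combinations are dense. More decisively, the exact equality is false in general, so no peeling schedule can work: take $U=\{\phi(\cdot-t):t\in\mathbb{T}\}\subset L^2(\mathbb{T})$ with $\phi$ smooth and $\hat\phi(k)>0$ for all $k$. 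The constant function $\hat\phi(0)$ lies in $\closure(\hull(U))$ (Riemann sums of translates converge to it in $L^2$), yet if $\hat\phi(0)=\sum_i\lambda_i\phi(\cdot-t_i)$ with $\lambda_i\ge0$, $\sum_i\lambda_i=1$, then testing the $k$-th Fourier coefficient for $k\neq0$ forces $\sum_i\lambda_i e^{-\mathrm{i}kt_i}=0$ for all $k\neq0$, i.e.\ the countably supported probability measure $\sum_i\lambda_i\delta_{t_i}$ would have to be Haar measure, a contradiction. Only the inclusion $\{\sum_{i\ge0}\lambda_iu_i\}\subseteq\closure(\hull(U))$ and the density of such sums survive. (For comparison, the paper's own justification is a brief diagonalization sketch that also does not establish the stated equality; what the downstream arguments really need is compactness of $\closure(\hull(U))$ together with approximation by finite convex combinations, which your construction does deliver.)
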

\begin{proof}
Let us consider a sequence $(u_n) \in conv(U)^{\sN}$, we have $u_n = \sum_{i=0}^{K_n} \lambda_{i,n} u_{i,n}$ where $u_{i,n} \in U\,,\; \forall i,n \in \sN $. Since $\lambda_{i,n} \in [0,1]$ and $u_{i,n} \in U$ that are compact sets these sequences have a convergent subsequence.
By Cantor diagonalization process, $(x_n)$ has a convergent subsequence.
\end{proof}



\subsection{Proof of Theorem~\ref{thm:finiteKepsilon}}


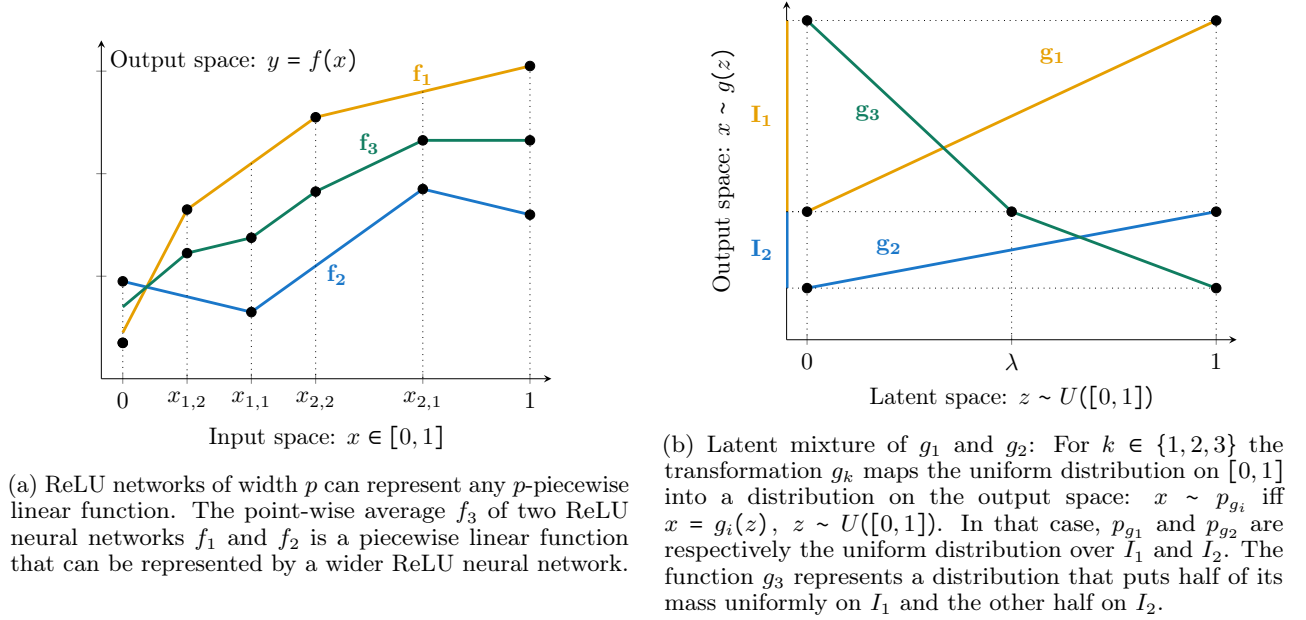
\begin{figure*}
\centering
\begin{subfigure}{.48\linewidth}
    \centering
    \scalebox{.9}{
    \begin{tikzpicture}
\tikzmath{
\r1 = 8;
\r2 = 3;
\r3 = 1;
\b1 = 1;
\b2 = 21;
\b3 = 41;
\b4 = 20;
\b5 = -8;
\b6 = 52;
\q1 = 5;
\q2 = 10;
\q3 = 20;
\x1 = 4; \y1 = \r1*\x1 + \b1; \z1 = .5*(\r1-\r3)*\x1 + .5*(\b1+\b4);
\x2 = 7; \y2 =-\r3 * \x2 + \b4; \z2 = .5*(\r2-\r3)*\x2 + .5*(\b2+\b4);
\x3 = 10; \y3 = \r2 * \x3 + \b2; \z3 = .5*(\r2+\r2)*\x3 + .5*(\b2+\b5);
\x4 = 15; \x5 = 20; \z4 = .5*(\r3+\r2)*\x4 + .5*(\b3+\b5); \z5 =.5*(\b3+\b6);
\y4 = \r2*\x4 + \b5; \y5 = \r3*\x5 + \b3; \y6 = -\r3*\x5 + \b6;
 } 
\begin{axis}[
axis x line=middle,
axis y line=middle,
ylabel=Output space: ${y=f(x)}$,
xlabel=Input space: ${x \in [0,1]}$,
xtick={1,\x1,\x2,\x3, \x4, \x5},
xticklabels={$0$,$x_{1,2}$,  $x_{1,1}$, $x_{2,2}$, $x_{2,1}$, $1$},
xlabel near ticks,
yticklabels={,,}
ylabel near ticks,
xmax=\x5+1,
ymax=\y5+5,
xmin=0,
ymin=0,
width=\textwidth,
height=.8\textwidth
]
\addplot[domain=1:\x1, draw=orangeblind, very thick] {\r1*x + \b1};
\addplot[domain=\x1:\x3, draw=orangeblind, very thick] {\r2*x + \b2};
\addplot[domain=\x3:\x5, draw=orangeblind, very thick] {\r3*x + \b3};
\node[anchor=north] (f1) at (axis cs: {.5*(\q2+\q3)}, {\r3*.5*(\q2+\q3)+ \b3+7}) {${\color{orangeblind}\mathbf{f_1}}$};
\addplot[domain=1:\x2, draw=blueblind, very thick] {-\r3*x + \b4};
\addplot[domain=\x2:\x4, draw=blueblind,very thick] {\r2*x + \b5};
\addplot[domain=\x4:\x5, draw=blueblind, very thick] {-\r3*x + \b6};
\node[anchor=north] (f1) at (axis cs: {.5*(\x2+\x4)}, {\r2*.5*(\x2+\x4)+ \b5-1}) {${\color{blueblind}\mathbf{f_2}}$};
\addplot[domain=1:\x1, draw=greenblind, very thick] {.5*(\r1-\r3)*x + .5*(\b1+\b4)};
\addplot[domain=\x1:\x2, draw=greenblind, very thick] {.5*(\r2-\r3)*x + .5*(\b2+\b4)};
\addplot[domain=\x2:\x3, draw=greenblind, very thick] {.5*(\r2+\r2)*x + .5*(\b2+\b5)};
\addplot[domain=\x3:\x4, draw=greenblind, very thick] {.5*(\r3+\r2)*x + .5*(\b3+\b5)};
\addplot[domain=\x4:\x5, draw=greenblind, very thick] {.5*(\b3+\b6)};
\node[anchor=north] (f1) at (axis cs: {.5*(\x3+\x4)}, {.25*(\r3+\r2)*(\x3+\x4) + .5*(\b3+\b5) + 8}) {${\color{greenblind}\mathbf{f_3}}$};
\draw[dotted] (axis cs:\x1,\y1) -- (axis cs:\x1, 0);
\draw[dotted] (axis cs:\x2,\r2*\x2 + \b2) -- (axis cs:\x2, 0);
\draw[dotted] (axis cs:\x3,\y3) -- (axis cs:\x3, 0);
\draw[dotted] (axis cs:1,19) -- (axis cs:1, 0);
\draw[dotted] (axis cs:\x4,\r3*\x4 + \b3) -- (axis cs:\x4, 0);
\draw[dotted] (axis cs:\x5,\y5) -- (axis cs:\x5, 0);
\addplot[only marks,mark=*] coordinates{(1,7)(\x1,\y1)(\x2,\y2)(\x3,\y3)(1,19)(\x4,\y4)(\x5,\y5)(\x5,\y6)};
\addplot[only marks,mark=*] coordinates{(1,7)(\x1,\z1)(\x2,\z2)(\x3,\z3)(\x4,\z4)(\x5,\z5)};

\end{axis}
\end{tikzpicture}
    }
    \caption{ReLU networks of width $p$ can represent any $p$-piecewise linear function.
    The point-wise average $f_3$ of two ReLU neural networks $f_1$ and $f_2$ is a piecewise linear function that can be represented by a wider ReLU neural network.
    }
    \label{fig:convex_function}
\end{subfigure}
\quad
\begin{subfigure}{.48\linewidth}
    \centering
    \scalebox{.9}{
    \begin{tikzpicture}

\tikzmath{
\r1 = 8;
\r2 = 3;
\r3 = 1;
\b1 = 1;
\b2 = 21;
\b3 = 12.4;
\b4 = 20;
\b5 = -8;
\b6 = 52;
\q1 = 5;
\q2 = 10;
\q3 = 20;
\x1 = 4; \y1 = 5.4; \z1 = .5*(\r1-\r3)*\x1 + .5*(\b1+\b4);
\x2 = 7; \y2 = 8.4+5; \x2 + \b4; \z2 = .5*(\r2-\r3)*\x2 + .5*(\b2+\b4);
\x3 = 11; \y3 = \b3+1; \z3 = .5*(\r2+\r2)*\x3 + .5*(\b2+\b5);
\x4 = 15; \x5 = 21; \z4 = .5*(\r3+\r2)*\x4 + .5*(\b3+\b5); \z5 =.5*(\b3+\b6);
\y4 = \b3+21; \y5 = \r3*\x5 + \b3; \y6 = -\r3*\x5 + \b6;
\I2 = .5*(\y1+\y2); \I1 = .5*(\y3+\y4);
 } 
\begin{axis}[
axis x line=middle,
axis y line=middle,
width=\textwidth,
height=.8\textwidth,
ylabel=Output space: ${\normalsize x \sim g(z)}$,
xlabel=Latent space: ${\normalsize z \sim U([0,1])}$,
xtick={1,\x3, \x5},
xticklabels={$0$, $\lambda$, $1$},
xlabel near ticks,
ytick={0, \I2, \I1},
ytick style={draw=none},
yticklabels={0,${\color{blueblind}\mathbf{I_2}}$, ${\color{orangeblind}\mathbf{I_1}}$},
ylabel near ticks,
xmax=\x5+1,
ymax=\x5 +\b3+2,
xmin=0,
ymin=0
]
\addplot[domain=1:\x5, draw=orangeblind, very thick] {x+\b3};
\node[anchor=north] (f1) at (axis cs: {\x3+2}, { \b3+\x3 + 8}) {${\color{orangeblind}\mathbf{g_1}}$};

\addplot[domain=1:\x5, draw=blueblind, very thick] {.4*x + 5};
\node[anchor=north] (f2) at (axis cs: {5}, {.4*\x5+3}) {${\color{blueblind}\mathbf{g_2}}$};

\addplot[domain=1:\x3, draw=greenblind, very thick] {2* (12 - x) + \b3-1};
\addplot[domain=\x3:\x5, draw=greenblind, very thick] {.8*(32-x) - 3.4};
\node[anchor=north] (f1) at (axis cs: {4}, {\b3+13}) {${\color{greenblind}\mathbf{g_3}}$};
\draw[dotted] (axis cs:11,0) -- (axis cs:11, \y2);
\draw[dotted] (axis cs:1,\y4) -- (axis cs:1, 0);
\draw[dotted] (axis cs:\x5,\y4) -- (axis cs:\x5, 0);
\draw[blueblind, ultra thick] (axis cs:0,\y1) -- (axis cs:0, \y2);
\draw[orangeblind, ultra thick] (axis cs:0,\y3) -- (axis cs:0, \y4);

\draw[dotted] (axis cs:0,\y2) -- (axis cs:\x5, \y2);
\draw[dotted] (axis cs:0,\y1) -- (axis cs:\x5, \y1);
\draw[dotted] (axis cs:\x5,\x5+\b3) -- (axis cs:0, \x5+\b3);
\draw[dotted] (axis cs:0,\b3+1) -- (axis cs: 1, \b3+1);
\addplot[only marks,mark=*] coordinates{(1,\y1)(\x5,\y1)(\x3,\y2)(21,13.4)(1,\b3+1)(1,\b3+21)(21,\b3+21)};

\end{axis}
\end{tikzpicture}
    }
    \caption{
    Latent mixture of $g_1$ and $g_2$: For $k \in \{1,2,3\}$ the transformation $g_k$ maps the uniform distribution on $[0,1]$ into a distribution on the output space: $x \sim p_{g_i}$ iff $x = g_i(z)\,, \; z \sim U([0,1])$. 
    In that case, $p_{g_1}$ and $p_{g_2}$ are respectively the uniform distribution over $I_1$ and $I_2$. The function $g_3$ represents a distribution that puts half of its mass uniformly on $I_1$ and the other half on $I_2$.
    }
    \label{fig:preMappingAvg}
\end{subfigure}
\caption{Difference between pointwise averaging of function and latent mixture of mapping.}
\label{fig:avgvsmixture}
\end{figure*}
We will prove a result a bit more general that the result stated in the main paper, 
\begin{restatable}{theorem}{FiniteKEpsilon}
Let $\varphi$ a game that satisfies the assumptions of Proposition~\ref{prop:minimax_hull}. 
If $\tilde \varphi$ is bilinear and $\varphi$ is $L$-Lipschitz then, 
\begin{equation}
    K_\epsilon^\Omega  \leq \frac{4D_w}{\epsilon^2}\ln(\mathcal{N}(\Theta,\tfrac{\epsilon}{2L})) 
    \quad \text{and} \quad 
    K_\epsilon^\Theta  \leq \frac{4D_\theta}{\epsilon^2}\ln(\mathcal{N}(\Omega,\tfrac{\epsilon}{2L})) 
\end{equation}
where $\mathcal{N}(\gH,\epsilon')$ is the number of $\epsilon'$-balls necessary to cover the set $A$ and the quantities $D_w$ and $D_\theta$ are defined as
$D_w:= \max_{w,w',\theta} \varphi(w,\theta) - \varphi(w',\theta)$ and $D_\theta := \max_{w,\theta, \theta'} \varphi(w,\theta) - \varphi(w,\theta') $.
\end{restatable}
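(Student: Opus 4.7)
The plan is a Maurey-style sparsification combined with a covering-number union bound, in the spirit of~\citet{arora2017generalization}'s argument but generalized to our bilinear $\tilde\varphi$ setting. By Proposition~\ref{prop:minimax_hull} (applied via Lemma~\ref{lemma:compact_convex_closure}) there exists an equilibrium $(f^*,q^*)\in \closure(\hull(\gF_\Omega))\times \closure(\hull(\gG_\Theta))$, so we may write $f^*=\int f_w\, d\mu^*(w)$ for some probability measure $\mu^*$ supported on $\Omega$ (and symmetrically for $q^*$). I will only argue the bound on $K_\epsilon^\Omega$; the bound on $K_\epsilon^\Theta$ follows by the symmetric argument exchanging the roles of $\Omega$ and $\Theta$.

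First, draw $K$ i.i.d.\ samples $w_1,\ldots,w_K\sim \mu^*$ and set $\hat f_K:=\frac{1}{K}\sum_{k=1}^K f_{w_k}\in \hull(\gF_\Omega)$. For any fixed $\theta\in \Theta$, bilinearity of $\tilde\varphi$ gives
\begin{equation*}
\tilde\varphi(\hat f_K, q_\theta) \;=\; \tfrac{1}{K}\sum_{k=1}^K \varphi(w_k,\theta),\qquad \E_{\mu^*}\!\big[\varphi(w,\theta)\big]=\tilde\varphi(f^*,q_\theta).
\end{equation*}
Since the range of $w\mapsto \varphi(w,\theta)$ is bounded by $D_w$, Hoeffding's inequality yields
\begin{equation*}
\sP\!\left(\big|\tilde\varphi(\hat f_K,q_\theta)-\tilde\varphi(f^*,q_\theta)\big|>\tfrac{\epsilon}{2}\right)\;\le\; 2\exp\!\Big(-\tfrac{K\epsilon^2}{2 D_w^2}\Big).
\end{equation*}

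Next, I upgrade this pointwise bound to a uniform one over $\Theta$ via a covering argument. Let $\Theta_\epsilon\subset \Theta$ be a minimal $\epsilon/(2L)$-net with $|\Theta_\epsilon|=\mathcal{N}(\Theta,\epsilon/(2L))$. A union bound gives, with probability at least $1-2\mathcal{N}(\Theta,\epsilon/(2L))\exp(-K\epsilon^2/(2D_w^2))$, the deviation is at most $\epsilon/2$ simultaneously at every $\theta\in \Theta_\epsilon$. The $L$-Lipschitz assumption on $\varphi$ (which transfers to $\theta\mapsto \tilde\varphi(\hat f_K,q_\theta)$ and to $\theta\mapsto \tilde\varphi(f^*,q_\theta)$, since both are averages of $L$-Lipschitz functions) then gives a further $\epsilon/2$ slack when passing from $\Theta_\epsilon$ to all of $\Theta$, so
\begin{equation*}
\sup_{\theta\in\Theta}\big|\tilde\varphi(\hat f_K,q_\theta)-\tilde\varphi(f^*,q_\theta)\big|\;\le\; \epsilon
\end{equation*}
with positive probability as soon as the failure probability is $<1$, i.e.\ as soon as $K\geq \frac{2D_w^2}{\epsilon^2}\ln\!\big(2\mathcal{N}(\Theta,\epsilon/(2L))\big)$. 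This shows that there exists a realization $\hat f_K$ such that $\max_{\theta}\tilde\varphi(\hat f_K,q_\theta)\le \tilde\varphi(f^*,q^*)+\epsilon=\Val(\Omega,\Theta)+\epsilon$, which by Definition~\ref{def:latent_eqm} certifies a one-sided $\epsilon$-equilibrium of size $K_\epsilon^\Omega\le K$, matching the claimed bound up to the explicit constant.

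The main technical obstacle I expect is the uniform-over-$\Theta$ step: the pointwise Hoeffding concentration is cheap, but extending it to a supremum requires both (i) the Lipschitz regularity of $\varphi$ in $\theta$ to discretize $\Theta$, and (ii) compactness/boundedness so that the covering number $\mathcal{N}(\Theta,\epsilon/(2L))$ is finite. A secondary subtlety is that Proposition~\ref{prop:minimax_hull} only guarantees $f^*$ in the \emph{closure} of the convex hull, so one technically approximates $f^*$ first by a finite (possibly large) convex combination of the $f_{w_i}$'s and then applies the sparsification step; alternatively, one uses that any element of $\closure(\hull(\gF_\Omega))$ can be written as a Bochner integral $\int f_w\, d\mu^*(w)$ with $\mu^*$ a probability measure on the compact set $\Omega$, which makes the sampling step unambiguous. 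Finally, in the version stated in the main text, plugging the standard covering-number estimate $\mathcal{N}(\Theta,\epsilon/(2L))\le (6RL/\epsilon^2)^p$ for a ball of radius $R$ in $\R^p$ recovers the displayed form $\tfrac{4 D_w^2\, p}{\epsilon^2}\ln(6RL/\epsilon^2)$, up to absorbing constants.
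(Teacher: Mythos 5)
Your proposal is correct and follows essentially the same route as the paper's proof: a Maurey-style sampling argument (draw $K$ i.i.d.\ parameters from the equilibrium mixture, apply Hoeffding pointwise, union-bound over an $\tfrac{\epsilon}{2L}$-net of $\Theta$, transfer to all of $\Theta$ by Lipschitzness, and use bilinearity so that the minimum over $\closure(\hull(\gG_\Theta))$ is attained at an atom $q_\theta$), exactly as in the appendix. One small slip in your conclusion: Definition~\ref{def:latent_eqm} requires $\min_{\theta}\tilde\varphi(\hat f_K,q_\theta)\geq \Val(\Omega,\Theta)-\epsilon$ rather than $\max_{\theta}\tilde\varphi(\hat f_K,q_\theta)\leq \Val(\Omega,\Theta)+\epsilon$, but this correct direction follows immediately from the two-sided uniform deviation bound you established together with $\min_{q}\tilde\varphi(f^*,q)=\Val(\Omega,\Theta)$.
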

In the literature,the quantity $\mathcal{N}(\gH,\epsilon')$ is called covering number of the set $\gH$. By definition of compactness, it is finite when $\gH$ is compact. It is a complexity measure of the set $\gH$ that has been extensively studied in the context of generalization bounds~\citep{mohri2012foundations,shalev2014understanding}.

\begin{proof}
This proof is largely inspired from the proof of~\citep[Theorem 2]{lipton1994simple} and~\citep[Theorem B.3]{arora2017generalization}. The difference with~\citep[Theorem B.3]{arora2017generalization} is that we make appear a notion of condition number and we provide this proof in a context more general than~\citep[Theorem B.3]{arora2017generalization} who was focusing on GANs.

One way to insure that $D_w$ amd $D_\theta$ have bounded value is by 
assuming that $\Theta$ and $\Omega$ have a finite diameter, we then have that the values of $D_w$ and $D_\theta$ are respectively bounded by $L \diam(\Theta)$ and $L \diam(\Omega)$. Note that is practice one also may have that the payoff is bounded between $-1$ (losing) and $1$ (winning).   

By Proposition~\ref{prop:minimax_hull} we have that there exists $f^*$ and $p^*$ such  
     \begin{equation}
        V(\Omega,\Theta) = \tilde \varphi \Big( f^*,p^* \Big)\,,
\end{equation}
where,

\begin{equation}\label{eq:f_k}
     f^* := \sum_{k=1}^\infty \lambda_k f_{w_k}  \;\text{and} \;
     p^* := \sum_{k=1}^\infty \rho_k p_{\theta_k} 
\end{equation}
where $w_k, \theta_k \in \Omega \times \Theta\,,\, \rho_k,\lambda_k > 0 \,,\, \sum_{k=1}^\infty \lambda_k = \sum_{k=1}^\infty \rho_k = 1$.

Now let us consider the mixture
\begin{equation}
    f_n^* := \frac{1}{n}\sum_{k=1}^n f_{w_k}
\end{equation}
where $w_k, 1\leq k\leq n$ are defined in~\eqref{eq:f_k} and are drawn independently from the multinomial of weights $(\lambda_k)_{k=1\ldots \infty}$.  

Let assume that $\Theta$ has a finite covering number $\mathcal{N}(\Theta,\epsilon/(2L))$ (in the following we will show that if $\Omega$ is compact then, its covering number is finite and we will give an explicit bound on it when $\Omega \subset \R^p$).
Let us recall that covering number of $\Theta$ is the smallest number of $\epsilon$ balls needed to cover $\Theta$.
Let us consider $\theta_i\,,\, 1\leq i \leq \mathcal{N}(\Theta,\tfrac{\epsilon}{2 L})$ the center of these balls where $L$ is the Lipschitz constant of $\varphi$. 
Using Hoeffding's inequality, for any $\theta_i\,,\, 1\leq i \leq \mathcal{N}(\Theta,\tfrac{\epsilon}{2 L})$ we have that,
\begin{equation}
    \sP( \tilde \varphi(f_n^*,p_{\theta_i}) - \tilde \varphi(f^*,p_{\theta_i}) < \epsilon/2) \leq e^{\frac{-n \epsilon^2}{2D_w^2}}
\end{equation}
where $D_w$ is a bound on the variations of $\varphi$ defined as 
\begin{equation}
    D_w:= \max_{w,w',\theta} \varphi(w,\theta) - \varphi(w',\theta) \,.
\end{equation}
Note that because we assumed that $\tilde \varphi$ is bilinear, the bound on the variations of $\varphi$ is also valid for the variations of $\tilde \varphi$. More precisely, we have
\begin{align}
    \tilde \varphi(\sum_i \lambda_i f_{w_i},p_{\theta}) - \tilde \varphi(\sum_i \lambda_i' f_{w_i'} ,p_{\theta}) 
    &= \sum_i \lambda_i( \varphi( w_i,\theta) - \varphi(w'_i ,\theta)) \\
    &\leq \sum_i \lambda_i D_w = D_w.
\end{align}
Thus, using standard union bounds, 
\begin{align}
    &\sP \big( \tilde\varphi(f_n^*,p_{\theta_i}) - \tilde \varphi(f^*,p_{\theta_i}) < \epsilon/2 \,, \, \forall 1\leq i \leq  \mathcal{N}(\mathcal{G},\tfrac{\epsilon}{2 \tilde L} ) \big)  \notag \\
    & \quad \leq  \mathcal{N}(\mathcal{G},\tfrac{\epsilon}{2\tilde L}) e^{\frac{-n \epsilon^2}{2D_w^2}}
\end{align}
Let us now consider 
\begin{equation}
    \hat p_n \in \arg \min_{p  \in \closure(\hull(\gG))} \tilde \varphi(f_n^*,p) = \arg \min_{\theta  \in \Theta} \tilde \varphi(f_n^*,p_\theta)
\end{equation}
Note that this minimum is achieved with $q \in \gG$ because we assumed that the function $\tilde \varphi$ is bilinear (and thus a minimum with respect to a convex hull is always achieved at an atom).\footnote{Note that we could get rid of the bilinear assumption by replacing the covering number of $\Theta$ by the covering number of $\hull(\gG)$. However the asymptotic behavior of the latter (when $\epsilon \to 0$) may be challenging to bound. We thus decided to focus on bilinear examples since the covering number for finite dimensional compact sets is a well studied quantity.} Thus there exists $\hat \theta_n \in \Theta$ such that $\hat p_n = p_{\hat \theta_n}$.

Since $\varphi$ is $L$-Lipschitz and since we have that $(\theta_i)$ is an $\tfrac{\epsilon}{2L}$-covering there exists a $\theta_i$ that is $\tfrac{\epsilon}{2L}$-close to $\hat \theta_n$ and thus,
\begin{equation}
     |\tilde \varphi(f_n^*,p_{\theta_i})- \min_{p \in \closure(\hull(\gG))} \tilde \varphi(f_n^*,p)| = \Big| \sum_{k=1}^n \tfrac{1}{n} ( \varphi(w_k,\theta_i) - \varphi(w_k,\hat \theta_n)) \Big|
     \leq  \epsilon/2
     \,.
\end{equation}
When we have that $\tilde \varphi(f^*,p_{\theta_i})- \tilde\varphi(f_n^*,p_{\theta_i}) < \epsilon/2$ (which is true with high probability) we have,
\begin{align}
    \min_{p \in \closure(\hull(\gG))}\varphi(f_n^*,p) 
    & \geq  \tilde \varphi(f_n^*,p_{\theta_i}) - \epsilon/2 \\
    & > \tilde \varphi(f^*,p_{\theta_i}) - \epsilon \\
    & =
    V(\Omega,\Theta) - \epsilon
\end{align}
Thus for $n > \frac{4D_w^2}{\epsilon^2}\ln(\mathcal{N}(\Theta,\tfrac{\epsilon}{2 L}))$ we have,
\begin{equation}
    \sP\big(\min_{p \in \closure(\hull(\gG))}\varphi(f_n^*,p) > V(\Omega,\Theta) - \epsilon \big) < 1
\end{equation}

Since this probability is strictly smaller than one, for any $\epsilon' > 0$, among all the possible sampled $f_n^*$ there exist at least one such that 
\begin{equation}
     \min_{p \in \closure(\hull(\gG))}\varphi(f_n^*,p) > \Val_L - \epsilon  \,.
\end{equation}
Thus, 
\begin{equation}
    K_\epsilon^{\Omega} \leq  \frac{4D_w^2}{\epsilon^2}\ln(\Theta,\tfrac{\epsilon}{2L})\,.
\end{equation}
A similarly we can prove a bound on $K_\epsilon^{\Theta}$.

\end{proof}
Then, we will use a simple bound for the covering number $\Theta \subset \R^d$ that can be found in~\citet{shalev2014understanding}, 
\begin{equation}
    \log \mathcal{N}(\Theta,\tfrac{\epsilon}{2L}) \leq d \log( \tfrac{4 L R \sqrt{d}}{\epsilon})\,.
\end{equation}
that leads to
\begin{equation}
    K_\epsilon^\Omega  \leq \frac{4D_w^2 d}{\epsilon^2} \log( \frac{4 L R \sqrt{d}}{\epsilon})
\end{equation}

\subsection{Proof of Proposition~\ref{prop:conv_function}}
\convexNetsFunction*
\begin{proof}
We will prove this result for an arbitrary convex combination. 
Let us start with the proof for a two-layers neural network of width $W$. It can be written as
\begin{equation}
    g(x) = \sum_{i=1}^W  a_i \sigma(c_i^\top x + d_i) + b_i 
\end{equation}
where $a_i,b_i \in \R^{d_{out}}, \, c_i \in \R^{d}$, and $d_i \in \R$ and $\sigma$ is any given non-linearity. Then, let us consider $K$ such functions with $p$ parameters, then any convex combination of these $K$ functions can be written as,
\begin{equation}
    f(x) = \sum_{k=1}^K\sum_{i=1}^{W_k}  \lambda_k(a_{i,k} \sigma(c_{i,k}^\top x + d_{i,k}) + b_{i,k})
\end{equation}
where $\lambda_k \geq 0\,,\, 1\leq k \leq K$ and $\sum_{k=1}^K \lambda_k = 1$.

Setting $\tilde a_{i,k} := \lambda_ka_{i,k} $ and $\tilde b_{i,k} := \lambda_k b_{i,k}$, we have that 
\begin{equation}
    f(x) = \sum_{(i,k)}  \tilde a_{i,k} \sigma(c_{i,k}^\top x + d_{i,k}) + \tilde b_{i,k}
\end{equation}
which is a neural network using the non-linearity $\sigma$ with $K\cdot p$ parameters.

Let us now consider $K$ neural networks $(f_{w_k})_{k \in 1\ldots K}$ with $p$ parameters $w_k \in [-R,R]^p$ and the same architecture, and $\lambda_k \geq 0\,,\, 1\leq k \leq K$ such taht $\sum_{k=1}^K \lambda_k = 1$. By multiplying the last layer of $f_k$ by $\lambda_k$ and concatenating each layer to get $w \in [-R,R]^{p \cdot K}$ we have that
\begin{equation}
    \sum_{k=1}^K \lambda_k f_{w_k} = f_w
\end{equation}
is a neural network with $K\cdot p$ parameters.
\end{proof}

\subsection{Proof of Proposition~\ref{prop:conv_distrib}}

\convexNetsDistribution*
\begin{proof}

We will prove the first part of this theorem for an arbitrary number $K$ of mappings.
Let $g$ be a two-layers ReLU network of width $p$,
the probability distribution $\pi_g$ induced by $g$ verifies,
\begin{equation}
    \pi_g(S) = \ell(g^{-1}(S)) \,,\quad \forall S \text{ measurable in } [0,1]^{d_{out}}\,.
\end{equation}
where $\ell$ is the Lesbegues measure on $[0,1]^d$. 
The convex combination $\pi$ of $\pi_{g_1}, \ldots, \pi_{g_K}$ verifies,
\begin{align*}
    \pi (S) 
    &:= \Big(\sum_{k=1}^K \lambda_k\pi_{g_k}\Big)(S)  \\
    &= \sum_{\substack{1\leq k \leq K}} \lambda_k \ell(g_k^{-1}(S))
\end{align*}
Using the properties of the Lesbegues measure we have that $\forall \lambda>0, b\in \R$
\begin{equation}
 \lambda \ell(U) = \ell(\lambda U + b)  \,,
\end{equation}
where $\lambda U$ is the dilation of the set $U$  and $U+b$ its translation by $b$. thus, we have that for any $b_k \in \R \,,\, k=1\ldots K$,
\begin{align*}
    \pi (S) 
    &= \sum_{\substack{1\leq k \leq K}}  \ell(\lambda_kg_k^{-1}(S) + b_k)
\end{align*}
Now notice that,
\begin{align}
    \lambda_kg_k^{-1}(S) + b_k 
    &= \{ \lambda_k x +b_k\, :\, x\in [0,1] \,,\, g_k(x) \in S \} \\
    &=  \{ x\, :\, x\in [b_k,\lambda_k + b_k] \,,\, g_k(x/\lambda_k +b_k) \in S \} 
\end{align}
Then, setting $b_k := \sum_{i=0}^{k-1} \lambda_k \in [0,1]$, we get by construction that $b_{k+1} = b_k + \lambda_k$ and thus that the sets $S_k:=[b_k,\lambda_k + b_k]$ are a partition of $[0,1]$.
Finally, if we note $\tilde g$ the function, 
\begin{equation}
    \tilde g(x) = g_k(x/\lambda_k +b_k) \; \text{if} \; x \in [b_k, b_{k+1}] 
\end{equation}
We have by construction (and by the fact that $S_k$ are disjoints)
\begin{equation}
     \pi_{\tilde g}(S) = \sum_{\substack{1\leq k \leq K}} \ell(\lambda_kg_k^{-1}(S) + b_k) = \Big(\sum_{k=1}^K \lambda_k\pi_{g_k}\Big)(S)
\end{equation}
However, the proof is not over because $\tilde g$ may not be continuous it cannot correspond in general to a neural network. We will now construct a neural network that approximate the distribution induced by $\tilde g$.
Let us introduce the approximated "step" function $h_\delta$ that is a ReLU net with $3$ parameters and 
\begin{equation}
    h_\delta(x) :=  \frac{1}{\delta}\lfloor x \rfloor_+ -  \frac{1}{\delta}\lfloor x -\delta  \rfloor_+ = \left\{ \begin{aligned} &0 \;\text{if } x <0 \\ & 1 \;\text{if } x > \delta \\ & x/\delta \;\text{otherwise.} 
    \end{aligned} \right.
\end{equation}
Thus we can introduce the
ReLU net $\tilde g_k$ defined as 
\begin{align}
    \tilde g_k (x) 
    &:=  g_k(x/\lambda_k +b_k)  - g_k(0) h_\delta(-x + b_k) - g_k(1) h_\delta(x + b_{k+1} )\\ 
    &= \left\{ \begin{aligned} &0 \;\text{if } x <b_k \text{ or } x>b_{k+1} \\ &g_k(x/\lambda_k +b_k) \;\text{if } b_k +\delta <x < b_{k+1} -\delta 
    \end{aligned} \right.
\end{align}
The second line is due to the fact that we assumed that $g_k(x) = g_k(0)\,,\, \forall x <0$ and $g_k(x) = g_k(1)\,,\, \forall x >1$.
Finally we have that the sum of $\tilde g_k$ for $k=1,\ldots K$ is a ReLU neural network with $K(p+6)$ parameters such that
\begin{align*}
    TV(\pi, \pi_{\sum_k \tilde g_k}) &= \sup_{S} |\pi(S) - \pi_{\sum_k \tilde g_k}(S)| \\
    &\leq K\delta  
\end{align*} Moreover, since $g_k$ has $p$ parameters in $[-R,R]$ we have that $\tilde g_k$ has $p+6$ parameters that are in $[-R/\lambda_k,R/\lambda_k]$. 
Since we assumed that the parameters of the ReLU network should be bounded by $KR$ we have that we cannot pick the parameters $g_k(0)/\delta$ and $g_k(1)/\delta$ larger than $KR$.

Thus by setting $\lambda_k = 1/K$, there exists a ReLU network with $K(p+6)$ parameters in $[-KR,KR]$ such that,
\begin{align*}
    TV(\pi, \pi_{\sum_k \tilde g_k}) &= \sup_{S} |\pi(S) - \pi_{\sum_k \tilde g_k}(S)| \\
    &\leq \frac{1}{R}  
\end{align*}

\end{proof}

\subsection{Proof of Theorem~\ref{thm:non-conv_minimax}}
\label{app:minimax}

\begin{reptheorem}{thm:non-conv_minimax} Let $\varphi$ be a $L$-Lipschitz nonconcave-nonconvex game with values bounded by $D$ that follows Assumption~\ref{assump:nonconvex-nonconcave_convex} for which the payoff $\tilde \varphi$ is bilinear and $\tilde L$ Lipschiz. The players are assumed to be parametrized neural networks $g:\R^d\to \R^{d_{out}}$ with $p$ parameters smaller than $R$, and satisfies one of the three following cases:
\begin{itemize}
    \item Both players pick functions. 
    For any $\epsilon >0$ there exists $(w_\epsilon^*,\theta_\epsilon^*) \in [-R,R]^{2p}$ s.t.,
\begin{equation} 
    \min_{\substack{\theta \in \R^{p_\epsilon} \\ \|\theta\|\leq R}} \varphi(w_{\epsilon}^*,\theta) + \epsilon \geq \max_{\substack{w \in \R^{p_\epsilon}\\ \|w\| \leq R}} \varphi(w,\theta_\epsilon^*)
    \,.
\end{equation}where $p_\epsilon \geq \frac{\epsilon}{2D}\sqrt{\frac{p}{\log(4L\sqrt{p}/\epsilon)}}$.
    \item The first player picks distributions whose generating function is a ReLU nets with $d_{in}=1$. The second player picks functions. This is for instance the setting of WGAN (Example~\ref{example:WGAN}). For any $\epsilon >0$ there exists $(w_\epsilon^*,\theta_\epsilon^*) \in [-R,R]^{2p}$ s.t.,
    \begin{equation} 
    \min_{\substack{\theta \in \R^{p_\epsilon} \\ \|\theta\|\leq R_\epsilon}} \varphi(w_{\epsilon}^*,\theta) + \epsilon + \frac{\tilde L}{R} \geq \max_{\substack{w \in \R^{p_\epsilon}\\ \|w\| \leq R_\epsilon}} \varphi(w,\theta_\epsilon^*) 
    \,.
\end{equation}
where $p_\epsilon \geq \frac{\epsilon}{2D}\sqrt{\frac{p}{\log(4L\sqrt{p}/\epsilon)}} - 6 \,,\,
    R_\epsilon \geq R\frac{p_\epsilon}{p}$, and  the subnetworks generating the distributions (see Eq.~\ref{eq:induced_distribution}) takes their values in $[0,1]^d$ and are constant outside of $[0,1]$.
    \item Both players pick distributions whose generating function is a ReLU nets with $d_{in}=1$. This is for instance the setting of the Blotto game (Examples~\ref{example:Blotto}). For any $\epsilon >0$ there exists $(w_\epsilon^*,\theta_\epsilon^*) \in [-R,R]^{2p}$ s.t.,
  \begin{equation}
    \min_{\substack{\theta \in \R^{p_\epsilon} \\ \|\theta\|\leq R_\epsilon}} \varphi(w_{\epsilon}^*,\theta) + \epsilon + \frac{2\tilde L}{R} \geq \max_{\substack{w \in \R^{p_\epsilon}\\ \|w\| \leq R_\epsilon}} \varphi(w,\theta_\epsilon^*) 
    \,.
\end{equation}
where $p_\epsilon \geq \frac{\epsilon}{2D}\sqrt{\frac{p}{\log(4L\sqrt{p}/\epsilon)}} - 6  \,,\,
    R_\epsilon \geq R\frac{p_\epsilon}{p}$ and the subnetworks generating the distributions (see Eq.~\ref{eq:induced_distribution}) takes their values in $[0,1]^d$ and are constant outside of $[0,1]$.
\end{itemize} 
\end{reptheorem}
\begin{proof}
Let $\epsilon>0$ and let us consider ReLU networks with $p_\epsilon$ parameters in $[-R_\epsilon,R_\epsilon]$ (we will set those quantities later). For simplicity here $\Omega = \Theta = [-R_\epsilon,R_\epsilon]^p$.
Theorem~\ref{thm:finiteKepsilon} says that an $\epsilon$-equilibrium can be achieved with a uniform convex combination of $K_\epsilon$ networks.

Let us consider the case where the first player is a function and the second player is a distribution. 

For the first player, one can apply Proposition~\ref{prop:conv_function} to say that such a convex combination of $K_\epsilon$ functions can be expressed with a larger network that has $K_\epsilon \cdot p_\epsilon$ parameters in $[-R_\epsilon,R_\epsilon]$.

For the second player, once can apply Proposition~\ref{prop:conv_distrib} to get that  that such a uniform convex combination of $K_\epsilon$ functions can be expressed up to precision $1/K_\epsilon R_\epsilon$ with a larger network that has $K_\epsilon \cdot p_\epsilon$ parameters in $[-K_\epsilon R_\epsilon,K_\epsilon R_\epsilon]$.

Thus we get that a sufficient condition for $\epsilon$-approximate equilibrium of the game $\varphi$ to be achieved by a ReLU network with $p$ parameters in $[-R,R]$ is that,
\begin{equation}
   p \geq (p_\epsilon +6)K_\epsilon 
    \quad \text{and} \quad 
    R \geq K_\epsilon R_\epsilon
\end{equation}
Let us set,
\begin{equation}
    p_\epsilon := \left \lfloor \frac{\epsilon}{2D}\sqrt{\frac{p}{\ln(4 L R\sqrt{p}/\epsilon)}} -6\right \rfloor
    \quad \text{and} \quad 
    R_\epsilon := R \frac{p_\epsilon}{p} 
\end{equation}
 Using the fact that in Theorem~\ref{thm:finiteKepsilon},  $K_\epsilon \leq \frac{4 D^2}{\epsilon^2} p_\epsilon\log \Big( \frac{4 L R_\epsilon \sqrt{p_\epsilon}}{\epsilon} \Big)$ we have that 
 \begin{equation}
     (p_\epsilon + 6 ) K_\epsilon \leq \frac{\epsilon^2}{4D^2}\frac{p}{\ln(4 L R\sqrt{p}/\epsilon)}\frac{4 D^2}{\epsilon^2}\log \Big( \frac{4 L R \sqrt{p_\epsilon}}{\epsilon} \Big) \leq p 
 \end{equation}
 where we used the fact that $p_\epsilon \leq p$ and $R_\epsilon \leq R$. Moreover, since $p \geq (p_\epsilon +6)K_\epsilon$ we have that, 
 \begin{equation}
      K_\epsilon R_\epsilon \leq \frac{p}{p_\epsilon + 6} R_\epsilon \leq R\,.
 \end{equation}
Finally, since in Proposition~\ref{prop:conv_distrib} we approximate such uniform convex combination up to a TV distance $1/R$ and since we assumed that $\varphi$ was $\tilde L$-Lipschitz (with respect to the TV distance) we have the additional $\frac{\tilde L}{R}$ term. 
\end{proof}

\newpage


\end{document}